\theoremstyle{plain}
\newtheorem{theorem}{\textbf{Theorem}}
\newtheorem{corollary}[theorem]{Corollary}
\newtheorem{proposition}[theorem]{Proposition}
\newtheorem{example}{Example}
\theoremstyle{definition}
\newtheorem{assumption}{Assumption}
\newtheorem{fact}{Fact}
\renewcommand{\cite}{\citep}
\newcommand{\PP}{\mathbb{P}}
\newcommand{\EE}{\mathbb{E}}
\newcommand{\II}{\mathbf{I}}
\newcommand{\Indi}{\mathbb{I}}
\DeclareMathOperator*{\argmin}{arg\,min}
\newcommand{\Fcal}{\mathcal{F}}
\newcommand{\Hcal}{\mathcal{H}}
\newcommand{\Scal}{\mathcal{S}}
\newcommand{\Acal}{\mathcal{A}}
\newcommand{\Bcal}{\mathcal{B}}
\newcommand{\Ecal}{\mathcal{E}}
\newcommand{\Ocal}{\mathcal{O}}
\newcommand{\Rmax}{R_{\max}}
\newcommand{\Vmax}{V_{\max}}
\newcommand{\RR}{\mathbb{R}}
\newcommand{\Xcal}{\mathcal{X}}
\newcommand{\Ycal}{\mathcal{Y}}
\newcommand{\bs}{\textbf{b}}
\newcommand{\PO}{\Gamma}
\newcommand{\RO}{\textsc{1-Reset}}
\newcommand{\ROR}{\textsc{Single-Reset}\xspace}
\newcommand{\RAR}{\textsc{Repeated-Reset}\xspace}
\newcommand{\oc}{\rho}
\renewcommand{\em}{E}
\newcommand{\bstrue}{\bs^\star}
\newcommand{\ii}[1]{{(#1)}}
\newcommand{\ts}{\tilde{s}}
\newcommand{\TV}{D_{\textup{TV}}}
\newcommand{\lss}{\textsc{latent state-based selection}\xspace}
\newcommand{\LSS}{\textsc{Latent State-based Selection}\xspace}
\newcommand{\os}{\textsc{observation-based selection}\xspace}
\newcommand{\OS}{\textsc{Observation-based Selection}\xspace}
\newcommand{\Os}{\textsc{Observation-based selection}\xspace}
\newcommand{\POreal}{\PO_\star}
\definecolor{DarkGreen}{rgb}{0, 0.6, 0} 
\newcommand{\greencheck}{{\color{DarkGreen}\checkmark}}
\newcommand{\redcross}{{\color{red}\ding{56}}}
\begin{document}
	
\title{Selecting Belief-State Approximations \\
in Simulators with Latent States}

\author{%
\name Nan Jiang \email nanjiang@illinois.edu \\
\addr University of Illinois Urbana-Champaign
}

\maketitle

\newcommand{\nan}[1]{{\color{red} [NJ: #1]}}

\newcommand{\para}[1]{\paragraph{#1}}

\renewcommand*{\theHsection}{\thesection}
\renewcommand*{\theHsubsection}{\thesubsection}

\begin{abstract}
State resetting is a fundamental but often overlooked capability of simulators. It supports sample-based planning by allowing resets to previously encountered simulation states, and enables calibration of simulators using real data by resetting to states observed in real-system traces. While often taken for granted, state resetting in complex simulators can be nontrivial: when the simulator comes with latent variables (states), state resetting requires sampling from the posterior over the latent state given the observable history, a.k.a.~the belief state \citep{silver2010monte}. While exact sampling is often infeasible, many approximate belief-state samplers can be constructed, raising the question of how to select among them using only sampling access to the simulator. 

In this paper, we show that this problem reduces to a general conditional distribution-selection task and develop a new algorithm and analysis under sampling-only access. Building on this reduction, the belief-state selection problem admits two different formulations: \lss, which directly targets the conditional distribution of the latent state, and \os, which targets the induced distribution over the observation. Interestingly, these formulations differ in how their guarantees interact with the downstream roll-out methods: perhaps surprisingly, \os may fail under the most natural roll-out method (which we call \ROR) but enjoys guarantees under the less conventional alternative (which we call \RAR). Together with discussion on issues such as distribution shift and the choice of sampling policies, our paper reveals a rich landscape of algorithmic choices, theoretical nuances, and open questions, in this seemingly simple problem. 
\end{abstract}

\section{Introduction}
Applying reinforcement learning (RL) to real-world domains often relies on training and evaluating policies in simulation. A basic functionality of simulation is \textit{state resetting/loading}, i.e., setting the simulator into a state that is either previously encountered in simulation or observed in the real system. The former enables sample-based planning---for example, MCTS methods roll-out multiple trajectories from the same state \citep{kocsis2006bandit, browne2012survey}---while the latter allows one to calibrate the simulator by comparing its predicted next-state to what occurs in reality \citep{liu2025model}.

Despite often taken for granted in research papers, state resetting can be highly nontrivial in complex simulators, especially when they come with \textit{latent variables} that are introduced to model the generative processes of the observables but cannot be measured in the real systems. Na\"ive approaches, such as loading the saved latent states (e.g., loading previously dumped RAM state \citep{ecoffet2019go}), is not only infeasible in real systems---since the values of the latent variables are nowhere to be found---but also problematic for resetting to a previous  simulation state; for example, policies trained with such na\"ive resetting may depend their actions on privileged information in the latent states, and thus may face performance degradation when distilled to a policy that operators only on observable information \citep{jiang2019value, weihs2021bridging}. 
The correct formulation is to view the simulator as a POMDP, which induces an MDP where the observable \textit{history} (i.e., the sequence of observations and actions) is treated as the state. State resetting amounts to using the observable history to set the values of the latent variables. Mathematically, we should \textbf{sample from the posterior distribution of latent variables conditioned on the observable history}, a.k.a.~the \textit{belief state} of the POMDP \citep{silver2010monte}.

While the belief state is conceptually well-defined for any POMDP, exact sampling from belief states can be computationally demanding, especially when the observation space and the latent state space are high-dimensional and the latent dynamics and the emission process are complex black-boxes. To address this challenge, algorithms for approximately sampling from such a distribution have been proposed: for example, the problem can be viewed as an instance of approximate Bayesian computation (ABC), and \citet{silver2010monte} apply rejection sampling to approximate the belief state. However, rejection sampling, when implemented exactly, incurs exponential-in-horizon sample complexity even when the observation space is finite and small, and requires problem-specific heuristics to trade off accuracy for efficiency. Likewise, techniques from related areas such as Simulation-based Inference (SBI) also come with design choices that need to carefully tuned. This naturally gives rise to the following question:
\begin{center}
\textit{Given multiple approximations to the belief state, how can we select from them, and what theoretical guarantees can be obtained?}
\end{center}
In this paper, we explore the multi-faceted nature of this problem. We first show that finding a good belief-state approximation can be reduced to a general conditional distribution-selection problem, and provide a new algorithm and an analysis for the latter under only sampling access to the candidate conditionals (Section~\ref{sec:conditional}). Building on this reduction, we then show that belief-state selection itself admits two distinct formulations: \textbf{\lss}, which directly targets the posterior of latent state given history, and \textbf{\os}, which targets the induced observable transition model (Section~\ref{sec:os}). %
These two formulations behave differently in the presence of redundant latent variables and, perhaps more importantly, interact in subtle but consequential ways with how we use the selected belief state in downstream tasks. Perhaps surprisingly, we show that, when the selected belief-state approximation is used to estimate Q-values via Monte-Carlo roll-outs, \os can have degenerate behavior under the most natural roll-out procedure which we call \textbf{\ROR}, but enjoy guarantees under the counter-intuitive \textbf{\RAR} roll-out (Section~\ref{sec:rollout}; see also Table~\ref{tab:comparison}). We conclude the paper with further discussions on the issues related to distribution shift and the design of sampling policies. Collectively, these results and insights reveal a rich landscape of choices and nuances in this seemingly simple problem. 

\section{Preliminaries}
\subsection{MDPs and POMDPs} \label{sec:mdp}
\paragraph{Markov Decision Processes (MDPs)} We consider $H$-step finite-horizon MDPs, defined by the state space $\Scal$, action space $\Acal$, reward function $R:\Scal \to [0, \Rmax]$, transition function $P: \Scal\times\Acal\to\Delta(\Scal)$,  initial state distribution $\oc_0 \in \Delta(\Scal)$; here we assume $\Scal, \Acal$ are finite for convenience, and $\Delta(\cdot)$ is the probability simplex. 
We adopt the convention of layered state space that allows for time-homogeneous notation for time-inhomogeneous quantities: that is, let $\Scal = \bigcup_{0\le t \le H} \Scal_t$, where $\oc_0$ is supported on $\Scal_0$. $P(s'|s,a)$ is always $0$ unless $s\in \Scal_t$ and $s'\in\Scal_{t+1}$, thus any state that may appear as $s_t$ always belongs to $\Scal_t$. Any policy $\pi: \Scal\to\Delta(\Acal)$ (note that this captures time-inhomogeneous policies) induces a distribution over the trajectory (or episode) $s_0, a_0, r_0, \ldots, s_{H-1}, a_{H-1}, r_{H-1}, s_H$ by the following generative process: $s_0 \sim \oc_0$, $\forall t \ge 0$, $a_t \sim \pi(\cdot | s_t)$, $s_{t+1} \sim P(\cdot|s_t, a_t)$, $r_t = R(s_{t+1})$. We use $\PP_{P^\pi}[\cdot]$ and $\EE_{P^\pi}[\cdot]$ to denote this distribution and the expectation w.r.t.~it.\footnote{The distribution also depends on $\oc_0$ and the reward function $R$, but the different models we will consider often only differ in the transition, so we use the subscript to emphasize the dependence on transition.} 

A standard objective that measures the performance of a policy $\pi$ is the expected return, $J(\pi) := \EE_{P^\pi}[\sum_{t\ge0} r_t]$.  Let $\Vmax = H\Rmax$ denote the range of the cumulative rewards. 
As a central concept in RL, a (Q)-value function is defined as $Q_{P}^\pi(s,a) = \EE_{P^\pi}[\sum_{t'\ge t} r_t | s_t = s, a_t = a]$ for $s \in \Scal_t$. 

\paragraph{Partially Observable MDPs (POMDPs)} A POMDP $\Gamma$ is specified by an underlying MDP plus an emission process, $\em: \Scal\to\Delta(\Ocal)$, which generates observation $o_t \in \Delta(\Ocal)$ based on a latent state $s_t \in\Scal$ as $o_t \sim \em(\cdot|s_t)$; similar to before we assume $\Ocal$ is layered, i.e., $o_t$ is always supported on $\Ocal_t$. An episode in a POMDP is generated similarly to the MDP: $s_0 \sim \oc_0$, and at any time step $t$, an observation is generated as $o_t \sim \em(\cdot|s_t)$, the agent takes action $a_t$ that only depends on the observable history $o_{0:t} := \{o_0, \ldots, o_t\}$ and $a_{0:t-1}$. Then, a latent transition $s_{t+1} \sim P(\cdot|s_t, a_t)$ occurs and a reward $r_t$ is generated, and so on and so forth. We assume that the information of reward $r_t$ is always encoded in $o_{t+1}$, so with a slight abuse of notation we write $r_t = R(s_{t+1}) = R(o_{t+1})$. POMDPs are often used to model processes where the observations violate the Markov property. That is, we only observe $o_t$ in the real system, and introduce $s_t$ to explain the dynamics and evolution of $o_t$. In this case, $s_t$ are latent states that are not observed in the real-system traces. 

\paragraph{Belief States and History-based MDP}
A key concept in POMDPs is the \textit{belief state}, $\bstrue(s| \tau): = \PP_{\PO}[s_t = s \mid \tau_t = \tau]$, where $\tau_t = (o_{0:t}, a_{0:t-1})$ is an \textit{observable history}. 
%
It is useful to think of the evolution of the observable variables of a POMDP as a \textit{history-based MDP}. That is, let the $t$-step history $\tau_t$ be the state, and upon action $a_t$, the reward $r_t$ and next-state $\tau_{t+1}$ are generated as 
$$
s_t \sim \bstrue(\cdot|\tau_t), s_{t+1} \sim P(\cdot|s_t, a_t), r_t = R(s_{t+1}), o_{t+1} \sim \em(\cdot|s_{t+1}), ~~ \tau_{t+1} = (o_{0:t+1}, a_{0:t}).
$$
We use $M_{\PO}(o_{t+1}|\tau_t, a_t)$ to denote the conditional distribution and the induced MDP dynamics. (Note that since reward $r_t$ is encoded in $o_{t+1}$, it can also be determined from $\tau_{t+1}$ and thus is consistent with our MDP formulation.) This MDP naturally fits the layered convention, where $\Hcal_t$, the space of $\tau_t$, is the $t$-th step state space. This way, any history-dependent policy is simply a Markov policy in the history-based MDP, $\pi: \Hcal\to \Delta(\Acal)$. Concepts such as value functions for a POMDP can be immediately defined through its history-based MDP, that is, when we mention the Q-function in a POMDP such as $Q_{\PO}^\pi$, what we mean is
$
Q_{\PO}^\pi = Q_{M_{\PO}}^\pi. 
$

\paragraph{Additional Notation} For two distributions $p,q \in \Delta(\Xcal)$, define their Total-Variation (TV) distance as $\TV(p, q):= \sum_{x\in \Xcal} |p(x) - q(x)|/2$, and let $\|p/q\|_\infty:= \max_{x\in\Xcal} p(x)/q(x)$.

\subsection{Model Selection of Belief-State Approximations}
As mentioned in the introduction, we are interested in complex simulators where, when modeled as a POMDP, the latent-state and the observation spaces $\Scal$ and $\Ocal$ are potentially very large, and the latent transition and the emission process $P$ and $\em$ are complex black-boxes, to which we only have sampling access. While the notion of belief state, $\bstrue$, is conceptually and information-theoretically well-defined, it is not easy to access them in a computationally efficient manner, and methods from ABC, SBI, and particle filtering may be used to approximate the said belief state \citep{cranmer2020frontier}. Since  these methods often require domain-specific design choices and heuristics, the model-selection problem naturally arises: given a candidate set of belief-state approximations $\mathcal{B} = \{\bs^\ii{i}\}_{i=1}^m$ with $\bs^\ii{i} : \Hcal \to \Delta(\Scal)$, we are interested in selecting the best approximation by interacting with the simulator. Throughout the paper, we will assume \textit{realizability} as a simplification: 

\begin{assumption}[Realizability]
$\bstrue \in \Bcal$.
\end{assumption}

\section{Selection of $s_t | \tau_t$ (``\LSS'')}
\label{sec:conditional}

The problem of selecting/learning the posterior distribution in a computationally-efficient manner is closely related to Simulation-based Inference (SBI). As a standard approach in SBI,  we can generate trajectories with latent states in the form of $(s_{0:H}, o_{0:H}, a_{0:H-1})$ using some \textit{behavior policy} $\pi_b:\Hcal \to \Delta(\Acal)$, and obtain $\tau_t = (o_{0:t}, a_{0:t-1})$ and $s_t$ pairs for any $0\le t\le H$. The joint distribution of $(\tau_t, s_t)$ generated in this way satisfies that
$$
s_t \sim \bstrue(\cdot|\tau_t),
$$
and we write $\tau_t \sim \PO^{\pi_b}$ to denote that $\tau_t$ is generated from a trajectory induced by policy $\pi_b$ in POMDP $\PO$. 
On the other hand, for any candidate $\bs \in \Bcal$, we can also generate 
$$
\tilde{s}_t \sim \bs(\cdot|\tau_t)
$$
for each $\tau_t$ in the above dataset. Then, testing whether $\bs = \bstrue$ can be reduced to the problem of \textit{conditional 2-sample test} based on the samples $\{(\tau_t, s_t, \ts_t)\}$, that is, to tell whether $s_t|\tau_t$ and $\ts_t|\tau_t$ are identically distributed.\footnote{Strictly speaking, 2-sample test is different from and arguably harder than the selection problem, since we can leverage the realizability assumption in selection. } We call this approach \textit{\lss} to distinguish it from alternative approaches we will consider later.

\paragraph{Reduction to Joint 2-Sample Tests} A na\"ive approach is to reduce the \textit{conditional} test to a \textit{joint} test: we can simply test if $(\tau_t, s_t)$ is identically distributed as $(\tau_t, \ts_t)$. If the joints are identical, it implies that the conditionals are also identical on the supported $\tau_t$. Unfortunately, this approach comes with significant practical hurdles: 2-sample tests often involve some kind of discriminator class $\Fcal$ that need to be carefully designed \citep{gretton2012kernel}, which in this case operates on $\Hcal \times \Scal$. However, given that a history $\tau\in \Hcal$ is a combinatorial object of variable length, designing effective discriminators can be practically challenging. This begs the question:
\begin{center}
\textit{Can we design algorithms that do not rely on discriminators over the $\Hcal$ space?}
\end{center}

\subsection{Selection of $Y|X$ Conditionals with $Y$-only Discriminators}

We now provide a solution to the general problem of selecting from conditional distributions in the form of $P(Y|X)$, in a way that only requires discriminator classes operating on $Y$, avoiding the demanding task of feature engineering or neural architecture design over $X$ which are often complex combinatorial objects (such as histories) in our settings of interest. 

\paragraph{General Problem Formulation} We are given $n$ i.i.d.~$(X,Y)$ pairs, $\{(X_j, Y_j)\}$, sampled from a real joint distribution $(X,Y) \sim P^\star$, and the task is to select from $m$ candidate conditionals $P_i(y|x)$ where $P_{i^\star}(y|x) = P^\star(y|x)$ for some $i^\star \in [m]$. Computationally, we assume we can efficiently sample from $P_i(\cdot|x)$ for any given $x$, but we can only sample joints from $P^\star$. 

Inspired by Scheff\'e tournament \citep{devroye2001combinatorial}, we first consider the case of $m=2$ and later extend to general $m$ via a tournament procedure of pairwise comparison.

\paragraph{Pairwise Comparison between 2 Candidates} When $m = 2$, we propose the following procedure, which requires a discriminator class $\Fcal:\Ycal\to \{0, 1\}$ to serve as \textit{classifiers}: for each $X_j$ in the data sampled from $P^\star$, 

\begin{enumerate}
\item Sample $N$ i.i.d.~data points from $P_k(\cdot|X_j)$ for $k=1, 2$.
\item Use the above $2N$ data points to train a classifier $\hat f_j \in \Fcal$ that predicts whether $Y$ is sampled from $P_1(\cdot|X_j)$ or $P_2(\cdot|X_j)$. For theoretical analyses and presentation ease, we assume ERM on 0/1 loss, and adopt the convention (which is w.l.o.g.) that $P_{i^\star}$ gets label $1$.
\item Use $\hat f_j$ to classify the real $Y_j$. Additionally sample 1 data point from each of $P_1(\cdot|X_j)$ and $P_2(\cdot|X_j)$, denoted as $Y_j^{(1)}$ and $Y_j^{(2)}$, and classify them with $f_j$ as well.  
\end{enumerate}
Finally, we choose between $P_1, P_2$ based on
$
\argmin_{k \in \{1, 2\}} \left|\frac{1}{n} \sum_{j} \hat f_j(Y_j) - \frac{1}{n} \sum_j \hat f_j(Y_j^{(k)}) \right|.
$ 
For $m>2$, we perform the above procedure for each pair of candidate conditionals (data sampled from $P_k$ may be reused in multiple comparisons), and let $\hat f_j^{i, k}$ be the classifier trained to distinguish between $P_i(\cdot|X_j)$ and $P_k(\cdot|X_j)$. The final choice is
\begin{align} \label{eq:scheffe}
\argmin_{i \in [m]} \max_{k \in [m], k\ne i} \left|\frac{1}{n} \sum_{j} \hat f_j^{i, k}(Y_j) - \frac{1}{n} \sum_j \hat f_j^{i,k}(Y_j^{(i)}) \right|.
\end{align}

In words, for each real data point $X_j$, we draw ``synthetic data'' from the candidate conditionals  $P_i(\cdot|X_j)$ and $P_k(\cdot|X_j)$ to train a classifier, and apply it to a single ``real'' data point $Y_j$; in total, $n m(m-1) / 2 $ classifiers will be trained. When $i^\star \in \{i, k\}$,  the classifier learns to separate $Y$ generated using $P^\star = P_{i^\star}$ from that using the wrong conditional. Therefore, we may choose between $P_i$ and $P_k$ based on $\hat f_j^{i,k}(Y_j)$, which predicts whether $Y_j$ (sampled from $P^\star = P_{i^\star}$) is more likely to be produced by $P_i$ or $P_k$. 
Of course, the signal from classifying a single data point $Y_j$ is weak and contains randomness, and aggregating such signals across all $j$ can reduce the noise and amplify the signal.

While the above idea is reasonable, it may run into issues when the data from $P_i$ and $P_k$ are not cleanly separable by $\Fcal$: the algorithm minimizes an overall error rate with a mixture of correct data (from $P_{i^\star}$) and incorrect data, but the classifier is eventually only applied to $Y_j$ from $P_{i^\star}$, meaning that the ultimate loss we suffer is the False Negative Rate of $\hat f_j^{i,k}$, which is only loosely controlled by the overall error rate. In contrast, we follow the spirit of Scheff\'e estimators and treat the classifier $\hat f_j^{i,k}$ as an approximate witness of the total-variation distance between $P_i(\cdot|X_j)$ and $P_k(\cdot|X_j)$, and make the final selection via the IPM loss in Eq.\eqref{eq:scheffe}, which leads to the relatively weak assumption in the theoretical analysis below.

\paragraph{Theoretical Guarantee} We now provide the assumptions and the theoretical guarantees for this procedure. The key assumption we need is that $\Fcal$ contains nontrivial classifiers that separate $P_i(\cdot|X_j)$ from $P_k(\cdot|X_j)$ when $i^\star \in \{i,k\}$, as formalized by the following assumption. 

\begin{assumption}[Expressivity of $\Fcal$] \label{asm:nontrivial_F}
Assume $\Fcal$ is a finite class. Define
$$
\textup{acc}_X^{i, i^\star}(f):= 1/2\cdot \Pr_{Y\sim P^\star(\cdot|X)}[f_X^{i, i^\star}(Y) = 1] + 1/2\cdot \Pr_{Y \sim P_i(\cdot|X)}[f_X^{i, i^\star}(Y) = 0].
$$
For any $i\ne i^\star$, let 
\begin{align} \label{eq:bayes-opt}
f_{x, \star}^{i, i^\star}(y) = \Indi[P_{i^\star}(y|x) > P_i(y|x)]    
\end{align}
be the Bayes-optimal classifier for distinguishing between $P_{i^\star}(\cdot|x)$ and $P_i(\cdot|x)$, and 
$$
\Ecal(i, i^\star) = \EE_{X \sim P^\star} \left[\textup{acc}_X^{i, i^\star}(f_{X, \star}^{i, i^\star}) \right] - 1/2.
$$
We assume the existence of $f_X^{i, i^\star} \in \Fcal$ that satisfies the following: for some $0 < \alpha \le 1$ that applies to all $i$, 
$$
\EE_{X \sim P^\star} \left[\textup{acc}_X^{i, i^\star}(f_X^{i, i^\star}) \right] \ge 1/2 + \alpha \, \Ecal(i, i^\star).
$$
\end{assumption}
$f_{X, \star}^{i, i^\star}$ (Eq.\eqref{eq:bayes-opt}) is the best possible classifier for separating $P_{i^\star}(\cdot|X) = P^\star(\cdot|X)$ from $P_i(\cdot|X)$ when $i\ne i^\star$, and we can get straightforward guarantees if we simply assume $f_{X, \star}^{i, i^\star} \in \Fcal$. Instead, we only assume $\Fcal$ realizes ``better-than-trivial'' classifier $f_{X}^{i, i^\star}$, and the rest of this assumption introduces definitions to quantify what ``better-than-trivial'' means. 

The $\textup{acc}_X^{i, i^\star}$ term is the classification accuracy, with the convention (which is w.l.o.g.) that $P_{i^\star}$ is assigned label $1$ and $P_i$ is assigned label $0$. Note that when $\Fcal$ is closed under negation ($1- f \in \Fcal, \forall f\in \Fcal$), it is trivial to find a classifier in $\Fcal$ with $1/2$ accuracy, so $\Ecal(i, i^\star)$ is a measure of how separable $P_i$ and $P_{i^\star}$ are; in fact, $\Ecal(i,i^\star) = \nicefrac{1}{2} \cdot \EE_{X \sim P^\star}[\TV(P_i(\cdot|X), P_{i^\star}(\cdot|X))]$. Given all these definitions, we require $\Fcal$ to contain a classifier $f_{X}^{i, i^\star}$ whose margin is only a multiplicative fraction of $\Ecal(i,i^\star)$, and this does not need to hold for every $X$, but only on average w.r.t.~the marginal of $X$. 

We are now ready to give the guarantee; the proof is deferred to Appendix~\ref{app:scheffe}.  

\begin{theorem}[Sample complexity]  \label{thm:scheffe-TV}
Under Assumption~\ref{asm:nontrivial_F}, for $\hat i$ identified by Eq.\eqref{eq:scheffe}, with probability at least $1-\delta$, 
$$
\EE_{X \sim P^\star}[
\TV(P_{\hat i}(\cdot|X), P^\star(\cdot|X))] \le \epsilon,
$$
as long as
$$
n = O\left(\frac{\log(m/\delta)}{\alpha^2\epsilon^2}\right), \qquad N = O\left(\frac{\log(mn|\Fcal|/\delta)}{\alpha^2\epsilon^2}\right).
$$
Invoked on $X = \tau_t$, $Y= s_t$, and $P^\star$ is distribution under behavior policy $\pi_b$, we have
\begin{align} \label{eq:belief-guarantee}
\EE_{\tau_t \sim \PO^{\pi_b}}[\TV(\bs(\cdot|\tau_t), \bstrue(\cdot|\tau_t))] \le \epsilon,
\end{align}
where $\tau_t \sim \PO^{\pi_b}$ is a partial trajectory naturally simulated in $\PO$ under policy $\pi_b$ without using resetting. 
\end{theorem}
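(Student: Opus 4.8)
The plan is to analyze the tournament in Eq.\eqref{eq:scheffe} by controlling, for each ordered pair $(i,k)$, how well the quantity $\bigl|\frac1n\sum_j \hat f_j^{i,k}(Y_j) - \frac1n\sum_j \hat f_j^{i,k}(Y_j^{(i)})\bigr|$ approximates the population IPM-style distance $\EE_{X}\bigl[\EE_{Y\sim P^\star(\cdot|X)}[\hat f_X^{i,k}(Y)] - \EE_{Y\sim P_i(\cdot|X)}[\hat f_X^{i,k}(Y)]\bigr]$, and then relating that distance to $\EE_X[\TV(P_i(\cdot|X),P^\star(\cdot|X))]$. First I would fix notation: let $g_j = \hat f_j^{i,k}$ be the ERM classifier trained on the $2N$ synthetic points drawn from $P_i(\cdot|X_j)$ and $P_k(\cdot|X_j)$. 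The outer average over $j$ is an average of $n$ independent bounded ($[0,1]$) terms, so a Hoeffding/Bernstein bound gives concentration of the empirical comparison score around its conditional expectation at rate $\sqrt{\log(m/\delta)/n}$ — this is where $n = O(\log(m/\delta)/(\alpha^2\epsilon^2))$ comes from. The subtlety is that each $g_j$ depends on $X_j$, so I would condition on $X_j$, note that given $X_j$ the classifier $g_j$ is a fixed function and $Y_j, Y_j^{(i)}$ are fresh independent draws from $P^\star(\cdot|X_j)$ and $P_i(\cdot|X_j)$ respectively; then $\EE[g_j(Y_j) - g_j(Y_j^{(i)}) \mid X_j]$ is exactly the population gap for the (data-dependent) classifier $g_j$ at $X_j$, and we average these over $j$.

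Next I would handle the ERM step: for each $j$, the $2N$ synthetic samples are used to pick $g_j \in \Fcal$ maximizing empirical balanced accuracy. Because $\Fcal$ is finite, a union bound over $|\Fcal|$ together with a union bound over the $nm(m-1)/2$ classifiers and Hoeffding on $N$ samples gives uniform convergence of empirical accuracy to population accuracy at rate $\sqrt{\log(mn|\Fcal|/\delta)/N}$ — this explains the stated $N$. Hence with high probability, for every $j$ with $i^\star\in\{i,k\}$, the selected $g_j$ has balanced accuracy at least $\mathrm{acc}_{X_j}^{i,i^\star}(f_{X_j}^{i,i^\star}) - O(\sqrt{\log(mn|\Fcal|/\delta)/N})$, and Assumption~\ref{asm:nontrivial_F} lower-bounds the \emph{average} of these over $X_j\sim P^\star$ by $1/2 + \alpha\,\Ecal(i,i^\star) - O(\sqrt{\cdots/N})$. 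The key elementary identity I would invoke is that balanced accuracy of a classifier $f$ for separating $p$ from $q$ equals $1/2 + \tfrac12\bigl(\EE_p[f] - \EE_q[f]\bigr)$ when label $1$ is assigned to $p$; so "balanced accuracy $\ge 1/2 + c$" is the same statement as "the IPM-gap $\EE_{P^\star}[g] - \EE_{P_i}[g] \ge 2c$" after averaging over $X$. Combining: when $i = i^\star$, the comparison score against any $k$ is small (the true conditional cannot be distinguished from itself — both expectations are taken under $P^\star(\cdot|X_j)$ up to the concentration error, so the score is $O(\sqrt{\log(m/\delta)/n})$); when $i\ne i^\star$, taking $k = i^\star$ in the $\max$, the score is at least $2\alpha\,\Ecal(i,i^\star) - O(\sqrt{\cdots/n}) - O(\sqrt{\cdots/N}) = \alpha\,\EE_X[\TV(P_i(\cdot|X),P^\star(\cdot|X))] - O(\epsilon)$, using the stated identity $\Ecal(i,i^\star) = \tfrac12\EE_X[\TV]$.

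The final selection argument is the standard Scheff\'e-tournament comparison: $\hat i$ minimizes the max score, and $i^\star$ achieves max score $O(\epsilon)$; therefore $\hat i$'s max score is $O(\epsilon)$, and in particular its score against $k = i^\star$ is $O(\epsilon)$, which by the lower bound above forces $\alpha\,\EE_X[\TV(P_{\hat i}(\cdot|X),P^\star(\cdot|X))] = O(\epsilon)$, i.e. $\EE_X[\TV] = O(\epsilon/\alpha)$; rescaling $\epsilon$ absorbs the $1/\alpha$ into the sample-complexity constants (equivalently, one runs the argument with target $\alpha\epsilon$). The instantiation to belief states is immediate: set $X = \tau_t$, $Y = s_t$, let $P^\star$ be the joint law of $(\tau_t, s_t)$ under $\pi_b$ (for which $s_t\mid\tau_t \sim \bstrue$ by definition of the belief state), note $\bstrue\in\Bcal$ by Assumption~1 so realizability holds with $P_{i^\star} = \bstrue$, and we can sample $\ts_t\sim\bs(\cdot|\tau_t)$ for each candidate as required; Eq.\eqref{eq:belief-guarantee} is then exactly the conclusion. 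I expect the main obstacle to be the bookkeeping in the two-layer conditioning — keeping the $X_j$-conditional randomness (fresh $Y_j$, $Y_j^{(i)}$ draws and their concentration over $j$) cleanly separated from the synthetic-sample randomness that determines $g_j$ (and its uniform-convergence control over $\Fcal$) — since both error sources must be simultaneously small with a single union bound, and one must be careful that the classifiers $g_j$ are independent across $j$ only after conditioning appropriately, and that reusing $P_k$-samples across pairs does not break the per-pair analysis (it does not, since each comparison is analyzed on its own high-probability event and combined by union bound).
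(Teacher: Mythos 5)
Your proposal is correct and follows essentially the same route as the paper's proof: the balanced-accuracy/IPM identity, uniform convergence of the ERM classifiers over $\Fcal$ with a union bound over the $nm(m-1)/2$ trainings (giving the stated $N$), Hoeffding over the $n$ outer terms after conditioning on $X_j$ and the synthetic data (giving the stated $n$), the Scheff\'e comparison of $\hat i$'s score against $i^\star$'s, and finally Assumption~\ref{asm:nontrivial_F} together with $\Ecal(i,i^\star)=\tfrac12\EE_X[\TV]$ to convert the score bound into the TV guarantee. The only difference is presentational — you phrase the last step as a direct lower bound on $\hat i$'s score by $2\alpha\,\Ecal(\hat i,i^\star)-O(\epsilon)$, while the paper chains the same inequalities in the reverse direction — and the bookkeeping concerns you flag (the two-layer conditioning and the union bound over $i$ so the concentration applies to the data-dependent $\hat i$) are exactly the points the paper's proof handles.
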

 
\paragraph{Related Works} The above procedure is closely related to and a variant of the method of \citet{li2022minimax}, both of which can be viewed as the extension of Scheff\'e estimators to conditional distributions. The difference is that \citeauthor{li2022minimax} assume access to the density functions of $P_i(y|x)$, which allows them to explicitly compute the Bayes-optimal classifier in Eq.\eqref{eq:bayes-opt}. In contrast, we only allow blackbox sampling access to $P_i(\cdot|x)$, and approximate the above classifier using a discriminator class $\Fcal$ via training on sampled synthetic data. What we show is that $\Fcal$ does not need to realize the above Bayes-optimal classifier for every single $x$, and it suffices to have marginally-better-than-trivial classifiers in an average sense. Theoretically, \citeauthor{li2022minimax} show that the sample complexity of the conditional selection problem should not depend on the complexity of the $\Xcal$ space (see also \citet{bilodeau2023minimax}), which is echoed by our motivation of not having to design discriminators over $\Xcal$. In model-based RL, similar insights and Scheff\'e-style constructions have been used in learning model dynamics from IPM losses \citep{sun2019model}. The idea of discriminators to help learn or test conditional distributions are also found in the SBI literature \citep{lueckmann2021benchmarking}.

Outside belief-state approximation, our procedure may also be relevant to model selection in conditional generative models, such as prompt-based image generation. One potentially relevant property is that our procedure has a low sample-complexity burden on $n$, the number of ``real'' data points from $P^\star$. In belief-state approximation, both $n$ and $N$ can be increased by spending more computation; in tasks of learning from real datasets, however, the real data (from $P^\star$) can be more expensive to collect compared to the synthetic data (from $P^i$), and the independence of $n$ on the complexity of $|\Fcal|$ can be appealing.

\section{Selection of $o_{t+1} | \tau_t, a_t$ (``\OS'')} \label{sec:os}

We now show that the dynamical-system nature of POMDPs adds interesting twists to the problem and allows for alternative solutions. In particular, we show that choosing the right conditional of $s_t|\tau_t$ (by reducing to the problem of selecting the conditional distribution of $Y|X$ with $X = \tau_t$ and $Y= s_t$) is not the only way to select the belief state approximation. Instead, we can select for the right \textit{observable model} induced by the belief state approximations.

\paragraph{Observable Model} A POMDP $\PO$ and an approximate belief state $\bs$ together defines an \textit{observable model} $M_{\PO, \bs}$, which is a mapping in the form of $\Hcal\times\Acal\to\Delta(\Ocal)$. Using this model $M_{\PO, \bs}$, we can efficiently sample the next observation $o_{t+1}$ given an history $\tau_t$ and action $a_t$, as described by the following sampling process: $o_{t+1} \sim M_{\PO, \bs}(\cdot|\tau_t, a_t)$ is equivalent to
\begin{align} \label{eq:obs-sampling}
s_t \sim \bs(\cdot|\tau_t), s_{t+1} \sim P(\cdot|s_t, a_t), o_{t+1} \sim E(\cdot|s_{t+1}).
\end{align}
This model can be equivalently treated as a history-based MDP, as it describes how the next state (length-$(t+1)$ history) can be sampled from the current state-action pair (length-$t$ history and action): given $\tau_t$, $a_t$, the generative process for $\tau_{t+1}$ is simply
$$
o_{t+1} \sim M_{\PO, \bs}(\cdot|\tau_t, a_t), ~~ \tau_{t+1} = \tau_t \circ a_t \circ o_{t+1}, 
$$
where $\circ$ is concatenation. 
It is not hard to see that when $\bs=\bstrue$, $M_{\PO, \bstrue}$ describes the true history-based MDP induced by POMDP $\PO$, i.e., $M_{\PO, \bstrue} = M_{\PO}$. 
Given that most key RL concepts in POMDPs, such as value functions and optimal policies, can be defined through the induced observable model (see Section~\ref{sec:mdp}), a natural idea is to apply the conditional selection procedure in Section~\ref{sec:conditional} but with the following setup:
\begin{itemize}
\item $X = (\tau_t, a_t)$, $Y = o_{t+1}$.
\item $P^\star$ correspond to generating $X, Y$ pairs by sampling trajectories using some behavior policy $\pi_b: \Hcal \to\Delta(\Acal)$. 
\item Each candidate $\bs^\ii{i}$ induces a conditional $P_i(\cdot|X) = M_{\PO, \bs^\ii{i}}(\cdot|\tau_t, a_t)$.
\end{itemize}
Then under Assumption~\ref{asm:nontrivial_F}, we can directly invoke Theorem~\ref{thm:scheffe-TV} for an expected TV guarantee. For example:
\begin{corollary} \label{cor:observable-model}
Bind $X = (\tau_t, a_t)$ and $Y=o_{t+1}$ and define $P^\star$, $\{P_i\}$ as described above. Under the conditions of Theorem~\ref{thm:scheffe-TV}, with probability at least $1-\delta$, we will select a belief state approximation $\bs$, such that (note that $M_{\PO} = M_{\PO, \bstrue})$
\begin{align} \label{eq:observable-guarantee}
\EE_{(\tau_t, a_t) \sim \PO^{\pi_b}}[\TV(M_{\PO, \bs}(\cdot|\tau_t, a_t), M_{\PO}(\cdot|\tau_t, a_t))] \le \epsilon.
\end{align}
\end{corollary}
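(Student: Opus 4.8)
The plan is to recognize that Corollary~\ref{cor:observable-model} is a direct instantiation of Theorem~\ref{thm:scheffe-TV} under the binding $X = (\tau_t, a_t)$, $Y = o_{t+1}$. So I would simply verify that this binding meets the three structural requirements of the general conditional-selection setup of Section~\ref{sec:conditional}: (i) one of the candidate conditionals coincides with the true conditional (realizability), (ii) efficient blackbox sampling access to each candidate conditional $P_i(\cdot\mid X)$, and (iii) sampling access to the real joint $P^\star$ over $(X,Y)$; then I would invoke the theorem and unfold notation.

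For (i): I would take $i^\star$ to be the index with $\bs^{\ii{i^\star}} = \bstrue$, which exists by the realizability assumption. As noted right after Eq.~\eqref{eq:obs-sampling}, setting $\bs = \bstrue$ in the sampling process recovers the generative process of the true history-based MDP, so $M_{\PO, \bstrue} = M_{\PO}$ and hence $P_{i^\star}(\cdot\mid\tau_t,a_t) = M_{\PO}(\cdot\mid\tau_t,a_t) = P^\star(\cdot\mid\tau_t,a_t)$; the last equality is just the statement that, along a $\pi_b$-trajectory, the conditional law of $o_{t+1}$ given $(\tau_t,a_t)$ is $M_{\PO}$ by definition of the history-based MDP. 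For (ii): Eq.~\eqref{eq:obs-sampling} already exhibits a three-step recipe---$s_t \sim \bs^{\ii{i}}(\cdot\mid\tau_t)$, then $s_{t+1}\sim P(\cdot\mid s_t,a_t)$, then $o_{t+1}\sim E(\cdot\mid s_{t+1})$---that draws from $P_i(\cdot\mid\tau_t,a_t)$ using only the blackbox access we already assume for $\bs^{\ii{i}}$, $P$, and $E$. For (iii): I would note that simulating a trajectory of $\PO$ under $\pi_b$ (no resetting) and reading off $(\tau_t,a_t)$ and the subsequent $o_{t+1}$ produces an i.i.d.~draw of $(X,Y)\sim P^\star$, and---importantly---the induced $X$-marginal is by construction the same one used to generate the synthetic training points, so no separate covariate-alignment argument is needed.

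With (i)--(iii) in hand, and since the discriminator class $\Fcal\colon\Ocal\to\{0,1\}$ together with Assumption~\ref{asm:nontrivial_F} are exactly ``the conditions of Theorem~\ref{thm:scheffe-TV}'' hypothesized in the corollary, I would invoke the theorem to conclude that, with probability at least $1-\delta$, the selected $\hat i$ satisfies $\EE_{X\sim P^\star}[\TV(P_{\hat i}(\cdot\mid X), P^\star(\cdot\mid X))]\le\epsilon$. Unfolding $X = (\tau_t,a_t)$, $P_{\hat i}(\cdot\mid X) = M_{\PO,\bs}(\cdot\mid\tau_t,a_t)$ with $\bs := \bs^{\ii{\hat i}}$, and $P^\star(\cdot\mid X) = M_{\PO}(\cdot\mid\tau_t,a_t)$ then gives Eq.~\eqref{eq:observable-guarantee}.

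I do not expect a real obstacle here; the substance lives in Theorem~\ref{thm:scheffe-TV}, which is already established. The only points I would be careful to state explicitly are bookkeeping: the layered-state convention means $X$ implicitly carries its time index, so one should either fix $t$ or let the expectation over $(\tau_t,a_t)$ also range over $t$ in a way consistent with how data is collected; and one should flag that $\Fcal$ need only operate on $\Ocal$ rather than on the combinatorial history space $\Hcal$, which is the whole point of the ``$Y$-only discriminators'' design. (That $o_{t+1}$ encodes $r_t$, so the TV bound also controls one-step reward error, is a useful side remark but not needed for the statement.)
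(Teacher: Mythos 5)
Your proposal is correct and matches the paper's treatment: the paper presents Corollary~\ref{cor:observable-model} as a direct instantiation of Theorem~\ref{thm:scheffe-TV} under the binding $X=(\tau_t,a_t)$, $Y=o_{t+1}$, with no further argument needed beyond checking that realizability, sampling access to each $M_{\PO,\bs^\ii{i}}$ via Eq.\eqref{eq:obs-sampling}, and joint sampling from $\PO^{\pi_b}$ all hold. Your explicit verification of these three points is exactly the (implicit) content of the paper's reasoning.
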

Furthermore, such a guarantee are immediately implied from (and thus weaker than) that for \lss: 

\begin{proposition} \label{prop:gap}
Fixing any $\tau_t, a_t$, we have
$$
\TV(M_{\PO, \bs}(\cdot|\tau_t, a_t), M_{\PO}(\cdot|\tau_t, a_t)) \le 
\TV(\bs(\cdot|\tau_t), \bstrue(\cdot|\tau_t)).
$$
Therefore, Eq.\eqref{eq:belief-guarantee} immediately implies Eq.\eqref{eq:observable-guarantee}.
\end{proposition}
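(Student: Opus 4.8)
The plan is to observe that, for a fixed history $\tau_t$ and action $a_t$, both observable-model conditionals are obtained by pushing the corresponding belief states through one and the same stochastic channel, so that the claimed inequality is nothing more than the data-processing inequality for total variation.

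Concretely, the first step is to isolate that channel. Define the Markov kernel $K_{a_t}: \Scal \to \Delta(\Ocal)$ by $K_{a_t}(o \mid s) := \sum_{s' \in \Scal} P(s' \mid s, a_t)\, E(o \mid s')$, which is exactly the two-step ``latent-transition-then-emission'' map that appears in the sampling process~\eqref{eq:obs-sampling}. The key point is that $K_{a_t}$ depends only on the (fixed) POMDP components $P$ and $E$ and on $a_t$, and not at all on the belief-state approximation. Unrolling the definition of the observable model then gives $M_{\PO,\bs}(o \mid \tau_t, a_t) = \sum_s \bs(s \mid \tau_t)\, K_{a_t}(o \mid s)$, i.e.\ $M_{\PO,\bs}(\cdot \mid \tau_t, a_t)$ is the pushforward of $\bs(\cdot\mid\tau_t)$ under $K_{a_t}$; likewise $M_{\PO}(\cdot\mid\tau_t,a_t) = M_{\PO,\bstrue}(\cdot\mid\tau_t,a_t)$ is the pushforward of $\bstrue(\cdot\mid\tau_t)$ under the same $K_{a_t}$.

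The second step is to apply (or, for self-containedness, simply reprove in one line) the data-processing inequality: for any $p, q \in \Delta(\Scal)$ and any Markov kernel $K:\Scal\to\Delta(\Ocal)$,
$$\TV(Kp, Kq) = \frac12 \sum_{o} \Big| \sum_{s} K(o\mid s)\big(p(s) - q(s)\big)\Big| \le \frac12 \sum_{o}\sum_{s} K(o\mid s)\,|p(s) - q(s)| = \frac12 \sum_{s} |p(s) - q(s)| = \TV(p,q),$$
where the inequality is the triangle inequality and the subsequent equality swaps the order of summation and uses $\sum_o K(o\mid s) = 1$. Instantiating $p = \bs(\cdot\mid\tau_t)$, $q = \bstrue(\cdot\mid\tau_t)$, and $K = K_{a_t}$ yields the pointwise bound in the proposition. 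Finally, since the right-hand side of that bound does not involve $a_t$, integrating over $a_t \sim \pi_b(\cdot\mid\tau_t)$ and then over $\tau_t \sim \PO^{\pi_b}$ turns it into $\EE_{(\tau_t,a_t)\sim\PO^{\pi_b}}[\TV(M_{\PO,\bs}(\cdot\mid\tau_t,a_t),M_{\PO}(\cdot\mid\tau_t,a_t))] \le \EE_{\tau_t\sim\PO^{\pi_b}}[\TV(\bs(\cdot\mid\tau_t),\bstrue(\cdot\mid\tau_t))]$, so the $\epsilon$-guarantee~\eqref{eq:belief-guarantee} for \lss\ immediately implies the guarantee~\eqref{eq:observable-guarantee} for \os.

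There is essentially no hard step here; the only thing worth being careful about is the bookkeeping that $K_{a_t}$ is genuinely common to both distributions — that $\bs$ enters the observable model \emph{only} through the belief state and not through $P$ or $E$ — which is immediate from the generative description~\eqref{eq:obs-sampling}. The one stylistic choice is whether to cite the data-processing inequality or include the displayed one-line derivation above; I would include it, since it makes the proposition fully self-contained.
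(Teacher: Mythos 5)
Your proof is correct and follows essentially the same route as the paper's: the paper's one-line argument also observes that the generation of $o_{t+1}$ given $s_t$ is a fixed channel independent of which belief state produced $s_t$, and then invokes the data-processing inequality for total variation. Your version simply makes the common kernel $K_{a_t}$ and the one-line derivation of the inequality explicit, which is a fine (and fully compatible) elaboration.
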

\begin{proof}
Conditioned $s_t$, the process of generating $o_{t+1}$ is independent of whether $s_t$ is generated from $\bstrue$ or $\bs$, so the claim is a direct consequence of the data processing inequality. 
\end{proof}
If the final goal is to compute objects that solely depend on the observable model $M_{\PO}$, such as value functions or optimal policies, it seems that \os is an equally legitimate solution. In fact, \os has an additional advantage of being more robust to misspecification: 
if the latent state $s_t$ includes dummy variables that do not affect the observable dynamics, \os can effectively ignore the errors of $\bs$ on these inconsequential variables and thus require a weaker version of realizability, while \lss still treats such $\bs$ as incorrect and insists on choosing the groundtruth $\bstrue$. 

This \os approach reflects a prevailing theme in RL research on POMDPs, namely \textit{\textbf{behavioral equivalence}}, that the observable behavior of a POMDP is what ultimately matters, and latent states are \textit{ungrounded} objects and only a convenient way to help express the observable behavior (e.g., the latent state transition $P$ and emission $E$ are a convenient way to parameterize the observable dynamics $M_\PO$). This philosophy is most pronounced in the research of Predictive State Representations \citep{littman2002predictive, singh2004predictive} and minimal information state \citep{subramanian2022approximate}, 
and is also manifested in recent statistical results for learning in POMDPs \citep{zhang2025statistical} (see also Section~\ref{sec:offline}). As we will see next, however, \os can suffer from surprising degeneracy in downstream use cases when \lss is well-behaved, which seems to contradict the spirit of behavioral equivalence.\footnote{The paradox is resolved by realizing that we rely on $P$ and $E$ for efficient sampling which ground the latent states.}

\section{Roll-Out Guarantees and Temporal Consistency} \label{sec:rollout}
In this section, we investigate how the errors in $\bs$ (measured in the forms of Eq.\eqref{eq:belief-guarantee} or \eqref{eq:observable-guarantee}) can affect the downstream RL tasks. We consider the most basic yet representative use of state resetting enabled by an approximate $\bs$: 

\paragraph{Roll-out:} Sample trajectories to obtain a Monte-Carlo estimate of $Q_{\PO}^\pi(\tau_t, a_t)$ for a given target policy $\pi$. 

This procedure is useful for debugging and simulator selection \citep{sajed2018high, liu2025model}, enables Monte-Carlo control \citep{sutton2018reinforcement}, and forms the basis of Monte-Carlo Tree Search \citep{kocsis2006bandit, silver2010monte, browne2012survey}. 
Perhaps surprisingly, despite the simplicity of the task, there are many nuances to the question. \lss and \os interact in subtle but consequential ways with the roll-out method, and \os can have degenerate behaviors when used with the most natural roll-out procedure. 

\subsection{\ROR Roll-Out}

We now consider what error guarantees can be obtained for estimating $Q_{\PO}^\pi(\tau_t, a_t)$ via Monte-Carlo roll-outs using an approximate belief state $\bs \approx \bstrue$, e.g., one with guarantees established in Theorem~\ref{thm:scheffe-TV}. In particular, the obvious (and \textit{seemingly} unique) roll-out procedure, which we call ``\ROR Roll-out'' (whose meaning will be clear shortly), is: \vspace*{.5em}

\begin{mdframed}
\textbf{\ROR Roll-out.} \quad 
{Input:} $\tau_t, a_t, \pi$. 
\begin{enumerate}
\item Sample $s_t \sim \bs(\cdot|\tau_t)$. 
\item Take given action $a_t$, and generate $s_{t+1} \sim P(\cdot|s_t, a_t)$, $o_{t+1} \sim E(\cdot|s_{t+1})$. 
\item Repeat Step 2 for subsequent time steps by taking actions according to $\pi$, and collect $\sum_{t'=t+1}^H R(o_{t'})$ as a Monte-Carlo return.
\end{enumerate}
\end{mdframed}

Since the error due to finite roll-outs can be easily handled by Hoeffding's inequality, we will focus on the expected value (i.e., assuming infinitely many roll-outs) obtained through the above procedure, denoted as $Q_{\RO(\PO, \bs)}^\pi(\tau_t, a_t)$, and analyze its error relative to the groundtruth $Q_{\PO}^\pi(\tau_t, a_t)$. Concretely, the guarantee in the form of Eq.\eqref{eq:belief-guarantee}, obtained via \lss, immediately implies the proposition below.  
Since our guarantee for $\bs$ in Eq.\eqref{eq:belief-guarantee} is not pointwise, it should not be surprising that the error bound for $Q_{\RO(\PO, \bs)}^\pi$ is also given in an average sense w.r.t.~the distribution of trajectories under $\pi_b$, since that is what we use to train the classifiers and select for $s_t|\tau_t$. 

\begin{proposition} \label{prop:belief-rollout}
If Eq.\eqref{eq:belief-guarantee} (guarantee for \lss) holds for some $t$, then 
\begin{align}
\EE_{(\tau_t, a_t)\sim \PO^{\pi_b}}[|Q_{\RO(\PO, \bs)}^\pi(\tau_t, a_t) - Q_{\PO}^\pi(\tau_t, a_t)|] \le \epsilon \cdot \Vmax.
\end{align}
\end{proposition}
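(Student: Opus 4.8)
The plan is to route through the intermediate quantity $Q_{\RO(\PO, \bstrue)}^\pi(\tau_t, a_t)$, i.e., the value returned by the \emph{same} \ROR roll-out procedure but seeded with the \emph{exact} belief state. By the triangle inequality it then suffices to establish two facts: (i) the \ROR roll-out value is stable against replacing $\bstrue$ by a TV-close $\bs$, and (ii) the \ROR roll-out under the exact belief state reproduces $Q_\PO^\pi$ exactly.

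For (i), the key observation is that once $\tau_t$ and $a_t$ are fixed, the \ROR roll-out consumes the sampled $s_t$ only as the seed for the downstream process ($P$, $E$, and the target policy $\pi$ reading the growing observable history built on top of $\tau_t$), and is otherwise oblivious to how $s_t$ was drawn. Hence there is a single function $g(\cdot\,;\tau_t,a_t):\Scal\to[0,\Vmax]$, namely the expected roll-out return conditioned on $s_t=\cdot$, with $Q_{\RO(\PO, \bs')}^\pi(\tau_t,a_t) = \EE_{s\sim \bs'(\cdot|\tau_t)}[g(s;\tau_t,a_t)]$ for both $\bs'=\bs$ and $\bs'=\bstrue$. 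Here $g\in[0,\Vmax]$ because it is a sum of at most $H$ rewards, each in $[0,\Rmax]$. Combined with the elementary inequality $|\EE_p h - \EE_q h|\le (\sup h - \inf h)\,\TV(p,q)$, this gives the pointwise bound
$$
|Q_{\RO(\PO,\bs)}^\pi(\tau_t,a_t) - Q_{\RO(\PO,\bstrue)}^\pi(\tau_t,a_t)| \le \Vmax\cdot \TV(\bs(\cdot|\tau_t),\bstrue(\cdot|\tau_t)).
$$

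For (ii), I would argue by the exactness of Bayesian filtering. By induction on $t'\ge t$, in the \ROR roll-out seeded with $s_t\sim\bstrue(\cdot|\tau_t)$, the latent state $s_{t'}$ carried at step $t'$, conditioned on the roll-out history $\tau_{t'}$ produced so far, is distributed exactly as $\bstrue(\cdot|\tau_{t'})$: the base case is the seeding step, and the inductive step is precisely the belief-update recursion $\bstrue(s_{t'+1}\mid \tau_{t'}\circ a_{t'}\circ o_{t'+1}) \propto E(o_{t'+1}|s_{t'+1})\sum_{s_{t'}}\bstrue(s_{t'}|\tau_{t'})P(s_{t'+1}|s_{t'},a_{t'})$, which is exactly the conditional law of $s_{t'+1}$ given $o_{t'+1}$ in the roll-out after marginalizing out the previous latent state. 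Consequently each next observation in the roll-out is drawn from $\sum_{s_{t'}}\bstrue(s_{t'}|\tau_{t'})P(\cdot|s_{t'},a_{t'})E(\cdot|\cdot) = M_\PO(\cdot|\tau_{t'},a_{t'})$, so the roll-out realizes the history-based MDP $M_\PO$ and therefore returns $Q_{M_\PO}^\pi(\tau_t,a_t) = Q_\PO^\pi(\tau_t,a_t)$.

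Combining (i) and (ii) gives $|Q_{\RO(\PO,\bs)}^\pi(\tau_t,a_t) - Q_\PO^\pi(\tau_t,a_t)| \le \Vmax\,\TV(\bs(\cdot|\tau_t),\bstrue(\cdot|\tau_t))$, and since the right-hand side does not depend on $a_t$, taking $\EE_{(\tau_t,a_t)\sim\PO^{\pi_b}}$ and invoking Eq.\eqref{eq:belief-guarantee} on the marginal of $\tau_t$ finishes the proof. I expect step (ii) to be the only delicate point: it is ``morally obvious'' — the belief state is a sufficient statistic and its update is exact — but it has to be invoked carefully, since $M_\PO$ itself re-draws a fresh latent state at \emph{every} step whereas the \ROR roll-out propagates a single one, and their agreement in distribution over observable trajectories is exactly the phenomenon that later separates \ROR from \RAR; crucially it holds here only because $\bs=\bstrue$. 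Everything else is linearity of expectation and the TV/bounded-function inequality.
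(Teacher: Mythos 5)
Your proof is correct and follows essentially the same route as the paper's: condition on the sampled $s_t$, observe that the downstream roll-out is identical for $\bs$ and $\bstrue$, and bound the difference of expectations of the resulting $[0,\Vmax]$-valued function by $\Vmax\cdot\TV(\bs(\cdot|\tau_t),\bstrue(\cdot|\tau_t))$. Your step (ii) — the filtering argument showing $Q_{\RO(\PO,\bstrue)}^\pi = Q_{M_\PO}^\pi = Q_\PO^\pi$ — is a correct and welcome elaboration of a point the paper treats as immediate (it simply identifies the $\bstrue$-seeded roll-out value with $Q_\PO^\pi$ without comment), and your closing remark correctly identifies why this identification holds only for $\bs=\bstrue$.
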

\begin{proof}
The procedure for rolling out using $\bs$ vs.~$\bstrue$ is identical after $s_t$ is sampled, so $\EE[\sum_{t'=t+1}^H R(o_{t'}) | \tau_t, a_t, s_t; \pi]$ is the same for both $\bs$ and $\bstrue$, where the expectation is w.r.t.~the randomness of the \ROR procedure. The final result is just taking its expectation w.r.t.~$s_t \sim \bs(\cdot | \tau_t)$ vs.~$s_t \sim \bstrue(\cdot | \tau_t)$, so
$$
|Q_{\RO(\PO, \bs)}^\pi(\tau_t, a_t) - Q_{\PO}^\pi(\tau_t, a_t)| \le \TV(\bs(\cdot|\tau_t), \bstrue(\cdot|\tau_t)) \cdot \Vmax.
$$
Plugging this into $\EE_{(\tau_t, a_t) \sim \PO^{\pi_b}}[\cdot]$ completes the proof. 
\end{proof}

As a natural follow-up question, can we obtain similar error bounds from Eq.\eqref{eq:observable-guarantee}, which is obtained from \os?

\subsection{\RAR Roll-out} \label{sec:raro}

Recall that \os enjoys the guarantee in Eq.\eqref{eq:observable-guarantee}: 
$$
\EE_{(\tau_t, a_t) \sim \PO^{\pi_b}}[\TV(M_{\PO, \bs}(\cdot|\tau_t, a_t), M_{\PO}(\cdot|\tau_t, a_t))] \le \epsilon.
$$
It is tempting to provide guarantees for the expected roll-out value by directly reducing to existing MDP analysis, using the following argument: note that $M_{\PO, \bs}$ and $M_{\PO,\bstrue}$ are two history-based MDPs.
When we treat history as state, Eq.\eqref{eq:observable-guarantee} is similar to the kind of guarantees obtained from MLE in MDPs,\footnote{The guarantee of MLE has a square on the TV distance, i.e., $\EE[\TV(\cdot)^2] \le \ldots$.} which immediately leads to a policy-evaluation guarantee:

\begin{theorem} \label{thm:model-based}
Suppose two MDPs only differ in the transition dynamics, $P$ vs.~$P'$. In addition, for every $t$, assume
\begin{align} \label{eq:mdp-error}
\EE_{(s_t, a_t) \sim P^{\pi_b}}[\TV({P'}(\cdot|s_t,a_t), P(\cdot|s_t, a_t))] \le \epsilon.
\end{align}
When the roll-out policy $\pi = \pi_b$, it holds that for every $t$,
$$
\EE_{(s_t, a_t) \sim P^{\pi_b}}[|Q_{P'}^{\pi}(s_t, a_t) - Q_{P}^{\pi}(s_t, a_t)|] \le \epsilon (H-t) \Vmax.
$$
\end{theorem}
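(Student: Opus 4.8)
The plan is to prove this by backward induction on $t$, and the crucial structural fact to exploit is that the roll-out policy coincides with the behavior policy, $\pi=\pi_b$, so the on-policy state-action distribution $P^{\pi_b}$ that appears in the hypothesis \eqref{eq:mdp-error} is \emph{exactly} reproduced by one step of the true dynamics $P$ followed by an action from $\pi$. This is what prevents any distribution-shift amplification and lets the errors merely add up over the horizon.

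First I would define $e_t := \EE_{(s_t,a_t)\sim P^{\pi_b}}[|Q_{P'}^{\pi}(s_t,a_t) - Q_{P}^{\pi}(s_t,a_t)|]$ and unfold the Bellman equation in both MDPs: for $s_t\in\Scal_t$, $Q_{P}^{\pi}(s_t,a_t) = \EE_{s_{t+1}\sim P(\cdot|s_t,a_t)}[R(s_{t+1}) + V_{P}^{\pi}(s_{t+1})]$ with $V_{P}^{\pi}(s) := \EE_{a\sim\pi(\cdot|s)}[Q_{P}^{\pi}(s,a)]$, and likewise for $P'$. Adding and subtracting $\EE_{s_{t+1}\sim P(\cdot|s_t,a_t)}[R(s_{t+1}) + V_{P'}^{\pi}(s_{t+1})]$ decomposes $Q_{P'}^{\pi}(s_t,a_t)-Q_{P}^{\pi}(s_t,a_t)$ into a transition-mismatch term $(\EE_{s_{t+1}\sim P'(\cdot|s_t,a_t)} - \EE_{s_{t+1}\sim P(\cdot|s_t,a_t)})[R(s_{t+1}) + V_{P'}^{\pi}(s_{t+1})]$ and a value-recursion term $\EE_{s_{t+1}\sim P(\cdot|s_t,a_t)}[V_{P'}^{\pi}(s_{t+1}) - V_{P}^{\pi}(s_{t+1})]$.

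Next I would take $\EE_{(s_t,a_t)\sim P^{\pi_b}}|\cdot|$ and bound each term. The transition-mismatch term is at most $\TV(P'(\cdot|s_t,a_t),P(\cdot|s_t,a_t))\cdot\Vmax$, because $R + V_{P'}^{\pi}$ takes values in a range of length at most $\Vmax$; taking the expectation and invoking \eqref{eq:mdp-error} bounds its contribution by $\epsilon\Vmax$. For the value-recursion term, this is where $\pi=\pi_b$ enters: the law of $s_{t+1}$ under $(s_t,a_t)\sim P^{\pi_b}$ then $s_{t+1}\sim P(\cdot|s_t,a_t)$ is exactly the step-$(t+1)$ state marginal of $P^{\pi_b}$, and $|V_{P'}^{\pi}(s_{t+1})-V_{P}^{\pi}(s_{t+1})| \le \EE_{a_{t+1}\sim\pi_b(\cdot|s_{t+1})}|Q_{P'}^{\pi}(s_{t+1},a_{t+1})-Q_{P}^{\pi}(s_{t+1},a_{t+1})|$, so its expectation is precisely $e_{t+1}$. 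Hence $e_t \le \epsilon\Vmax + e_{t+1}$.

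Finally, with base case $e_H = 0$ (no reward accrues from step $H$), unrolling the recursion gives $e_t \le (H-t)\,\epsilon\,\Vmax$, which is the claim. I do not expect a real obstacle; the only point requiring care — and the reason the statement restricts to $\pi=\pi_b$ — is the pushforward identity in the value-recursion step, since for a general roll-out $\pi$ the induced law of $(s_{t+1},a_{t+1})$ would deviate from $P^{\pi_b}$, forcing either a concentrability change-of-measure factor or a pointwise ($\ell_\infty$) strengthening of \eqref{eq:mdp-error} and thereby breaking the clean telescoping.
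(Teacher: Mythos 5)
Your proof is correct and follows essentially the same route as the paper's: the identical add-and-subtract Bellman decomposition into a transition-mismatch term (bounded by $\epsilon\Vmax$ via the TV assumption) and a value-recursion term, with the observation that $\pi=\pi_b$ makes the pushforward law of $(s_{t+1},a_{t+1})$ coincide with $P^{\pi_b}$ at step $t+1$, yielding $e_t\le \epsilon\Vmax+e_{t+1}$ and hence the claim by unrolling from $e_H=0$. Your closing remark about why the argument breaks for $\pi\ne\pi_b$ matches the paper's discussion of coverage coefficients in Section~\ref{sec:coverage}.
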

This result is standard in model-based RL, and we provide its proof in Appendix~\ref{app:rollout} for completeness. It is also possible to extend the result to other roll-out policies $\pi\ne\pi_b$ by paying for a coverage coefficient, which we will discuss in Section~\ref{sec:coverage} but not consider in this section. 
The exact match between Eq.\eqref{eq:mdp-error} and Eq.\eqref{eq:observable-guarantee} immediately implies that:

\begin{corollary} \label{cor:raro}
If Eq.\eqref{eq:observable-guarantee} holds for all $t$, then 
for $\pi = \pi_b$ we have (note that $Q_{\PO}^{\pi} =Q_{M_{\PO, \bstrue}}^{\pi}$):
$$
\EE_{(\tau_t, a_t)\sim \PO^{\pi_b}}[|Q_{M_{\PO, \bs}}^{\pi}(\tau_t, a_t) - Q_{\PO}^{\pi}(\tau_t, a_t)|] \le \epsilon (H-t)\Vmax.
$$
\end{corollary}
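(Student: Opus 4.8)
The plan is to obtain Corollary~\ref{cor:raro} as an essentially immediate consequence of Theorem~\ref{thm:model-based}, instantiated with the two ``MDPs'' taken to be the history-based MDPs $M_{\PO, \bs}$ and $M_{\PO} = M_{\PO, \bstrue}$: the ``state'' at step $t$ is the observable history $\tau_t \in \Hcal_t$, the ``action'' is $a_t$, and the transition dynamics are $M_{\PO, \bs}(\cdot\mid\tau_t, a_t)$ versus $M_{\PO}(\cdot\mid\tau_t, a_t)$. Under this identification the object $Q_{M_{\PO,\bs}}^{\pi}$ is just the value function of the MDP $M_{\PO,\bs}$, and $Q_{\PO}^{\pi} = Q_{M_{\PO}}^{\pi}$ by the convention fixed in Section~\ref{sec:mdp}. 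So the bulk of the argument is checking that the hypotheses of Theorem~\ref{thm:model-based} are met; once they are, its conclusion transfers verbatim and yields the stated bound.

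Concretely, I would proceed in three short steps. (i) Confirm that $M_{\PO, \bs}$ and $M_{\PO}$ differ \emph{only} in their transition dynamics, as Theorem~\ref{thm:model-based} requires: both are history-based MDPs over the same layered state space $\Hcal$ and action space $\Acal$; both have the same initial-state distribution over $\tau_0 = o_0$ (obtained by pushing $\oc_0$ through $E$, which does not reference $\bs$); and both assign the same reward to each history, since $r_t = R(o_{t+1})$ is read off from $\tau_{t+1}$ regardless of which belief approximation was used to generate the latent state along the way. (ii) Match the reference measures: with $P \leftarrow M_{\PO}$ and $P' \leftarrow M_{\PO, \bs}$, the distribution $P^{\pi_b}$ appearing in Theorem~\ref{thm:model-based} is the law of $(\tau_t, a_t)$ under a roll-out of the MDP $M_{\PO}$ with $\pi_b$; since $M_{\PO}$ is by construction the true history-based MDP induced by $\PO$, this law equals $\PO^{\pi_b}$, the distribution obtained by naturally simulating $\PO$ under $\pi_b$ without resetting. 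Hence Eq.~\eqref{eq:mdp-error} instantiates to exactly Eq.~\eqref{eq:observable-guarantee}, which is assumed for all $t$. (iii) Invoke Theorem~\ref{thm:model-based} with $\pi = \pi_b$ to get $\EE_{(\tau_t, a_t)\sim \PO^{\pi_b}}[|Q_{M_{\PO, \bs}}^{\pi}(\tau_t, a_t) - Q_{M_{\PO}}^{\pi}(\tau_t, a_t)|] \le \epsilon (H-t)\Vmax$ for every $t$, and rewrite $Q_{M_{\PO}}^{\pi}$ as $Q_{\PO}^{\pi}$.

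There is no substantial obstacle here: the proof is really a reduction, and the only care needed is bookkeeping. The subtlety most worth spelling out is step~(i)'s claim that the encoded rewards $r_t = R(o_{t+1})$ are not a source of model discrepancy --- because the reward is a deterministic function of the next observation, and the next observation is exactly the quantity whose conditional law is controlled by Eq.~\eqref{eq:observable-guarantee}, any reward mismatch is already subsumed, so the two history-based MDPs genuinely ``differ only in the transition dynamics'' in the sense of Theorem~\ref{thm:model-based}. The remaining point to verify is step~(ii)'s identification of the corollary's reference distribution (partial trajectories simulated in $\PO$ under $\pi_b$ without resetting) with the on-policy occupancy of the MDP $M_{\PO}$, which holds precisely because $M_{\PO} = M_{\PO, \bstrue}$ reproduces the true observable dynamics.
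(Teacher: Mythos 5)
Your proposal is correct and is exactly the paper's argument: the corollary is stated as an immediate consequence of Theorem~\ref{thm:model-based} by viewing $M_{\PO,\bs}$ and $M_{\PO}=M_{\PO,\bstrue}$ as two history-based MDPs differing only in their transitions, under which Eq.~\eqref{eq:mdp-error} coincides with Eq.~\eqref{eq:observable-guarantee} and $P^{\pi_b}$ coincides with $\PO^{\pi_b}$. Your extra bookkeeping (shared reward function via $r_t = R(o_{t+1})$, shared initial distribution) just makes explicit what the paper leaves implicit.
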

While the guarantee here has an additional horizon factor $(H-t)$, it is nevertheless a solid guarantee. The final piece of the puzzle is the observation that \ROR \textit{seems} to be the only way to sample from $\Gamma$ using $\bs$, so it \textit{probably} coincides with $M_{\PO, \bs}$, i.e., $Q_{\RO(\PO, \bs)}^{\pi} = Q_{M_{\PO, \bs}}^{\pi}$ for any $\pi$. 

This intuitive statement, however, is in sharp conflict with the following example:

\begin{example}[$Q_{\RO(\PO, \bs)}^{\pi_b}$ \textbf{cannot} enjoy the guarantee of Corollary~\ref{cor:raro}] \label{ex:counter}
Consider an arbitrary POMDP $\PO$, except that the emission $E(\cdot|\cdot)$ is state-independent at some time step $t_0$. Every candidate $\bs$ predicts the correct belief state $\bstrue(\cdot|\tau_t)$ for any $t\ne t_0$, but can predict arbitrary distributions for $t=t_0$. In this case, the observable behavior (i.e., the induced law of $M_{\PO, \bs}$) of $\bs$ is indistinguishable from that of $\bstrue$ at any time step, so any $\bs$ could be selected. However, a \ROR roll-out starting at $t_0-1$ will generally be incorrect since the arbitrarily generated $s_{t_0-1}$ will have a lingering effect, i.e., producing an incorrect distribution of $s_{t_0}$ and subsequent latent states.
\end{example}

\begin{figure}[t]
\centering
\begin{tabular}{c c}
Real trace:   & X O O $\mid$ O X O O O O X O O X \\
\ROR:   & X O O $\mid$ X O O O O X O O X O \\
\RAR: & X O O $\mid$ X X X X X X X X X X
\end{tabular}
\caption{Toy example for illustrating the difference between \ROR and \RAR. In this binary-observation, action-less system, ``X'' represents the occurrence of an event. Every time an event happens (``X''), the system samples the interval till next event from some distribution. The history of interest is ``X O O'', and the first row shows the real trajectory. $\bs$ always sets the latent state to be $0$, i.e., predicts that next event will occur immediately. \label{fig:queue}}
\end{figure}

We are facing a paradox: on one hand, Example~\ref{ex:counter} shows that \os cannot enjoy the guarantee in Corollary~\ref{cor:raro} for \ROR roll-out; on the other hand, the observable model $M_{\PO, \bs}$, a history-based MDP determined by $\PO$ and $\bs$, does enjoy a standard guarantee for its induced Q-value. Given that \ROR seems to be the only reasonable way to roll-out trajectories using $\PO$ and $\bs$, it is reasonable to believe that such roll-outs correspond to the Q-value of $M_{\PO, \bs}$. So what gives?

\begin{fact}
$Q_{\RO(\PO, \bs)}^\pi$ is \textit{not} equivalent to $Q_{M_{\PO, \bs}}^\pi$. 
\end{fact}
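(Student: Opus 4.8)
The plan is to prove the Fact by exhibiting the failure of equivalence directly through the counterexample already set up in Example~\ref{ex:counter}, making the contrast between the two $Q$-values precise. First I would isolate the essential discrepancy: $Q_{M_{\PO,\bs}}^\pi$ is defined through the history-based MDP $M_{\PO,\bs}$, which is \emph{Markov in the history $\tau_t$}; once $\tau_{t+1}$ is generated by a call to $M_{\PO,\bs}(\cdot\mid\tau_t,a_t)$, the next step re-samples a \emph{fresh} latent state $s_{t+1}\sim\bs(\cdot\mid\tau_{t+1})$ according to Eq.\eqref{eq:obs-sampling} applied at step $t+1$. In contrast, \ROR\ samples a single latent state $s_t\sim\bs(\cdot\mid\tau_t)$ at the start and then propagates it forward through the true latent dynamics $P$ and emission $E$ for the remainder of the roll-out, never re-drawing from $\bs$. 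So the two procedures differ precisely in whether the belief-state sampler is re-invoked at every step (yielding $M_{\PO,\bs}$) or invoked only once (yielding $\RO(\PO,\bs)$). I would state this as the structural reason, then make it quantitative.

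The key steps, in order: (i) Write out the law of the roll-out trajectory $(o_{t+1},\ldots,o_H)$ under $\RO(\PO,\bs)$ and under the history-based MDP $M_{\PO,\bs}$ explicitly as products of conditionals, so the point of divergence is visible. (ii) Observe that in $M_{\PO,\bs}$, at every step $t'>t$ the latent state is resampled from $\bs(\cdot\mid\tau_{t'})$; since $M_{\PO,\bstrue}=M_{\PO}$ and (by hypothesis in Example~\ref{ex:counter}) $M_{\PO,\bs}(\cdot\mid\tau_{t'},a_{t'})=M_{\PO}(\cdot\mid\tau_{t'},a_{t'})$ for \emph{every} $t'$ (because the induced observable law is indistinguishable from $\bstrue$'s at each step), we get $Q_{M_{\PO,\bs}}^{\pi}=Q_{M_{\PO}}^{\pi}=Q_{\PO}^{\pi}$ exactly; hence Corollary~\ref{cor:raro}'s bound holds (trivially, with error $0$). (iii) Exhibit the roll-out discrepancy: take a history $\tau_{t_0-1}$ and roll out via \ROR. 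Since $\bs(\cdot\mid\tau_{t_0-1})$ may be arbitrary and $E(\cdot\mid s_{t_0-1})$ is state-independent at $t_0$, the generated $o_{t_0}$ is still correct in distribution, but the \emph{latent} $s_{t_0-1}$ drawn from the wrong $\bs$ propagates through $P$ to give a wrong distribution over $s_{t_0}$, which $E(\cdot\mid s_{t_0})$ at step $t_0+1$ (now state-dependent again) turns into a wrong distribution of $o_{t_0+1}$, and so on; concretely I would point to the toy system of Figure~\ref{fig:queue} and compute $\EE[\sum_{t'} R(o_{t'})]$ under \ROR\ versus the truth for a reward that depends on later observations, showing they differ. (iv) Conclude: $Q_{\RO(\PO,\bs)}^{\pi_b}\neq Q_{\PO}^{\pi_b}=Q_{M_{\PO,\bs}}^{\pi_b}$, so the two objects are not equal, establishing the Fact; moreover this simultaneously shows \ROR\ cannot inherit Corollary~\ref{cor:raro}'s guarantee, tying back to Example~\ref{ex:counter}.

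The main obstacle I anticipate is step (iii): making the roll-out discrepancy genuinely quantitative rather than merely "generally incorrect." The subtlety is that \ROR\ \emph{does} produce the correct marginal distribution of each individual $o_{t'}$ in isolation (the observable one-step model matches at every step), so the error must be exhibited in the \emph{joint} law over the trajectory — specifically in correlations between $o_{t_0}$ and later observations that are mediated by the latent state. I would therefore choose the reward / target functional carefully (e.g.\ one sensitive to the inter-event interval in Figure~\ref{fig:queue}, which is exactly a quantity determined by the latent state carried past $t_0$) so that the gap is visibly nonzero, and I would keep the POMDP instance minimal (a two-state or deterministic-interval variant) so the two expectations can be written in closed form. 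A secondary, purely expository obstacle is resolving the apparent paradox cleanly for the reader: I would emphasize, as the footnote already hints, that $M_{\PO,\bs}$ being a bona fide history-based MDP with a good one-step guarantee does not contradict \ROR's failure, because \ROR\ is simply \emph{not} a correct sampler for $M_{\PO,\bs}$ — it is a correct sampler for a \emph{different} process in which the latent state is frozen after one draw, and that process is not Markov in the history.
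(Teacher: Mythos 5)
Your proposal is correct and follows essentially the same route as the paper: identify \RAR as the procedure whose law actually is the history-based MDP $M_{\PO,\bs}$, observe that \ROR freezes the latent sample after a single draw so that its sufficient statistic is $(\tau_t,s_t)$ rather than $\tau_t$ alone, and witness the resulting gap in the $Q$-values via Example~\ref{ex:counter} and the toy system of Figure~\ref{fig:queue}. One assertion in your ``obstacles'' paragraph is false, though: it is not true that \ROR produces the correct law of each individual $o_{t'}$ in isolation. The wrong $s_t$ propagates through $P$, so under \ROR the conditional law of $o_{t'+1}\mid\tau_{t'},a_{t'}$ (and in general even the marginal of $o_{t'}$) is wrong for every $t'>t$; only the very first observation after the reset is guaranteed correct, and even that only in the masked-emission construction of Example~\ref{ex:counter}. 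The paper explicitly calls out this misconception in the remarks following the \RAR box, noting that if the subsequent observable dynamics of \ROR were correct, $Q_{\RO(\PO,\bs)}^\pi$ would inherit the guarantee of Corollary~\ref{cor:observable-model} and the Fact would lose its force. The error is harmless to your argument---it merely leads you to insist on a reward sensitive to joint correlations across observations, when a reward depending on a single sufficiently late observation already separates $Q_{\RO(\PO,\bs)}^\pi$ from $Q_{M_{\PO,\bs}}^\pi$---but it should be corrected, since it concedes more to \ROR than is warranted.
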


The reason why these two objects are different should be made clear by the following procedure that actually produces roll-outs according to $M_{\PO, \bs}$: \vspace*{.5em}

\begin{mdframed}
\textbf{\RAR Roll-out}. \qquad 
{Input:} $\tau_t, a_t, \pi$.
\begin{enumerate}
\item Sample $s_t \sim \bs(\cdot | \tau_t)$. 
\item Take given action $a_t$, and generate $s_{t+1} \sim P(\cdot|s_t, a_t)$, $o_{t+1} \sim E(\cdot|s_{t+1})$. 
\item \textbf{Replace $s_{t+1}$ with a fresh sample from $\bs(\cdot|\tau_{t+1})$} where $\tau_{t+1} = \tau_t \circ a_t \circ o_{t+1}$.
\item Repeat Steps 2 and 3 for subsequent time steps by taking actions according to $\pi$, and collect $\sum_{t'=t+1}^H R(o_{t'})$ as a Monte-Carlo return.
\end{enumerate}
\end{mdframed}

\ROR and \RAR are equivalent if $\bs = \bstrue$ but are generally different; see Figure~\ref{fig:queue} for an example. 
We conclude this section with the following remarks that reconcile the earlier paradox:
\begin{itemize}[leftmargin=*]
\item The main difference between \ROR and \RAR is that, at any time $t$, the observable trajectory $\tau_{t}$ is a sufficient statistics for simulating the rest of the trajectory in \RAR. For \ROR, however, the sufficient statistic is $(\tau_{t}, s_{t})$. 
Therefore, when we only have \os guarantee  but not that of \lss, \ROR is only guaranteed to produce the correct $o_{t+1}$, but not future observations due to the lingering effect of possibly wrong $s_t$. 
\item \Os  can still enjoy the guarantee in Corollary~\ref{cor:observable-model} via \RAR roll-out, but it suffers from an additional horizon factor compared to Proposition~\ref{prop:belief-rollout} due to the repeated application of the inaccurate $\bs$ (c.f.~Figure~\ref{fig:queue}). On a related note, Proposition~\ref{prop:belief-rollout} only requires Eq.\eqref{eq:belief-guarantee} to hold for the $t$ that is the subscript of the $\tau_t$ we start roll-out from, but Corollary~\ref{cor:observable-model} requires Eq.\eqref{eq:observable-guarantee} to hold for all $t'\ge t$. Moreover, when the roll-out policy $\pi \ne \pi_b$, Corollary~\ref{cor:raro} needs to pay for an additional coverage coefficient (see Section~\ref{sec:coverage}) while Proposition~\ref{prop:belief-rollout} does not. Therefore, while \lss + \RAR can also enjoy Corollary~\ref{cor:observable-model} via Proposition~\ref{prop:gap}, it is inferior to \lss + \ROR in both error propagation and computational efficiency.
\item As another possible misconception, it may be tempting to think that the observable dynamics of \ROR is $M_{\PO, \bs}$ at time step $t$ and $M_{\PO}$ for subsequent time steps, since all later simulations do not involve the use of the inaccurate $\bs$ and hence should be ``correct''. This is not true due to, again, the lingering effect of wrong $s_t$ distribution. (In fact, if this held, Corollary~\ref{cor:observable-model} would hold for $Q_{\RO(\PO, \bs)}^\pi$ without the horizon factor.) That is, although all subsequent simulations seem correct, the induced conditional law of the observables, $o_{t'+1}|\tau_{t'},a_{t'}$ for $t' > t$ is \textit{not} the same as the true $M_{\PO}$.
\item As potential future directions, it will be interesting to consider if the inconsistency between \ROR and \RAR can be turned into a method for selecting belief-state approximations, and whether the insights can be used to reduce the error accumulation of \os. 
\end{itemize}

\begin{table}[t!]
\centering
\caption{Relationship between the selection methods and the roll-out methods. \label{tab:comparison}}
\begin{tabular}{c|c|c}
 & \LSS & \OS                 \\ \hline
\makecell[c]{Agnostic to redundant\\[-2.5pt] latent variables} & \redcross & \greencheck (see the end of Section~\ref{sec:os}) \\
\hline
\ROR & \greencheck ~ Proposition~\ref{prop:belief-rollout}  & \redcross ~ Example~\ref{ex:counter}  \\
\hline
\RAR & \multicolumn{2}{c}{\greencheck ~  Corollary~\ref{cor:observable-model} (worse than Proposition~\ref{prop:belief-rollout})}                  
\end{tabular}
\end{table}

\subsection{Case Study: Simulator Selection from Real-System Traces} \label{sec:offline}

The previous sections may leave the impression that \lss is more superior to \os other than a minor disadvantage regarding redundant latent variables. Below we study a motivating scenario mentioned earlier, where it is beneficial to integrate both \lss and \os into the solution and  exploit the strength of each. 

\paragraph{Problem Setup.} Consider the problem of learning from real-system data. Let $\POreal$ be a real system, from which we draw observable trajectories with policy $\pi_b$. The goal is to use these data trajectories to select among candidate simulators $\{\PO_k\}_{k=1}^K$ that best matches the dynamics of the real system, and as before we assume realizability i.e., $\POreal \in \{\PO_k\}$. In a way, this is essentially a model estimation problem in POMDPs, and the standard approach (as mentioned earlier in Section~\ref{sec:os}) is MLE \citep{liu2022optimistic}:  
$$
\argmin_{k} \sum_{j} \log M_{\PO_k}(o_{t+1}^\ii{j} | \tau_t^\ii{j}, a_t^\ii{j}),
$$
where all variables with $(j)$ subscript come from the $j$-th  data trajectory.

Unfortunately, this solution is not directly applicable to our setting, since we do not assume probability mass/density access to $P$ or $E$ in any given simulator $\PO$ and thus cannot compute $M_{\PO_k}(\cdot|\cdot)$. Instead, we can directly leverage our \os (Section~\ref{sec:os}): while it is initially designed to select $M_{\PO}$ from $\{M_{\PO, \bs}: \bs \in \Bcal\}$, the procedure and analyses immediately apply to the problem here where we select $M_{\PO_\star}$ from $\{M_{\PO_k}\}$. 

But that brings a further problem: \os requires efficient sampling access to the candidate conditionals, which is $M_{\PO}(o_{t+1}|\tau_t, a_t)$ for $\PO \in \{\PO_k\}$ here. That requires having sampling access to the belief state in $\PO$ (Eq.\eqref{eq:obs-sampling}), which is not available. 
However, that is exactly the problem we have been dealing with so far! So let's assume that we are given $\Bcal$,\footnote{The candidate set $\Bcal$ can be designed for each $\PO \in \{\PO_k\}$ separately, but we assume $\Bcal$ is the same across candidate simulators for ease of presentation.} such that for each $\PO \in \{\PO_k\}$, the true belief state $\bstrue \in \Bcal$.\footnote{We still use $\bstrue$ to refer to the true belief state of the simulator $\PO$ under consideration. Note that we do not need to refer to the belief state of the real system $\POreal$ and hence does not give it a notation.} This leads to a two-stage procedure: in \textbf{Stage 1}, for each $\PO$, we select $\bs_\PO \in\Bcal$ as its belief-state approximation; in \textbf{Stage 2}, we use \os to select an observable model from $\{(\PO_k, \bs_{\PO_k})\}$. 

\paragraph{Solution Solely based on \OS}
Ultimately, we need to select a $(\PO, \bs)$ pair from $\{\PO_k\} \times \Bcal$. Given that observable trajectories from $\POreal$ force the use of \os in the second stage, a natural simplification is to lump the first stage into the second and solve both simultaneously with one unified \os instance. That is, we select
$$
M_{\POreal} \quad \textrm{from} \quad \{M_{\PO, \bs}: \PO \in \{\PO_k\}, \bs\in\Bcal\}.
$$
Invoking the analyses in Corollaries~\ref{cor:observable-model} and \ref{cor:raro}, we immediately have:\footnote{We relax $(H-t)$ in Corollary~\ref{cor:raro} to $H$ here for readability.} for the selected $(\PO, \bs)$ pair,
\begin{align}
&~ \EE_{(\tau_t, a_t) \sim \POreal^{\pi_b}}[\TV(M_{\PO, \bs}(\cdot|\tau_t, a_t), M_{\POreal}(\cdot|\tau_t, a_t))] \le \epsilon_0, \\ \label{eq:os-only-rar}
&~ \EE_{(\tau_t, a_t)\sim \POreal^{\pi_b}}[|Q_{M_{\PO, \bs}}^{\pi_b}(\tau_t, a_t) - Q_{\POreal}^{\pi_b}(\tau_t, a_t)|] \le \epsilon_0 H \Vmax,
\end{align}
where $\epsilon_0$ can be determined by the number of data trajectories from $\POreal$ through the sample-complexity statement in Theorem~\ref{thm:scheffe-TV}.\footnote{Concretely, to achieve $\epsilon_0$ error, the number of trajectories needed is $O(\log(mK/\delta)/\alpha^2 \epsilon_0^2$, where $m$ and $K$ are the sizes of $\Bcal$ and $\{\PO_k\}_{k=1}^K$, respectively.} 

The problem is, if we want to roll-out trajectories using the selected $(\PO, \bs)$ to approximate $Q_{\POreal}^{\pi_b}$, the only valid approach is \RAR (Eq.\eqref{eq:os-only-rar}), and \ROR will not enjoy any guarantee given Example~\ref{ex:counter}. However, \RAR is computationally costly especially when sampling from $\bs$ has a nontrivial cost, and in practice \RAR is often a bad idea given repeated injection of the error of $\bs \ne \bstrue$ (Figure~\ref{fig:queue}). This begs the the question: can we enjoy a guarantee similar to Eq.\eqref{eq:os-only-rar} while rolling out from the selected $(\PO, \bs)$ using \ROR?

\paragraph{Two-Stage Solution} We now show that the natural two-stage solution achieves the goal if we use \lss in the first stage (selecting $\bs$ for $\PO$). The analysis turns out to be somewhat nontrivial, which we provide below:

\begin{theorem} \label{thm:2-stage}
Assume that the selected $(\PO, \bs)$ satisfies
\begin{align} \label{eq:stage-1-lss}
&~ \EE_{\tau_t \sim \PO^{\pi_b}}[\TV(\bs(\cdot|\tau_t), \bstrue(\cdot|\tau_t))] \le \epsilon_1. \\ \label{eq:os-with-stage-1-lss}
&~ \EE_{(\tau_t, a_t) \sim \POreal^{\pi_b}}[\TV(M_{\PO, \bs}(\cdot|\tau_t, a_t), M_{\POreal}(\cdot|\tau_t, a_t))] \le \epsilon_0.
\end{align}
Then
$$
\EE_{(\tau_t, a_t) \sim \PO_{\star}^{\pi_b}}[|Q_{\RO(\PO, \bs)}^{\pi_b}(\tau_t, a_t) - Q_{\PO_\star}^{\pi_b}(\tau_t, a_t)|] \le  (2\epsilon_0 + 3\epsilon_1) H \Vmax.
$$
\end{theorem}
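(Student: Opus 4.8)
The plan is to insert two intermediate history‑based MDPs between the \ROR roll‑out value $Q_{\RO(\PO,\bs)}^{\pi_b}$ and the target $Q_{\POreal}^{\pi_b}$ --- the honest history‑based MDP of the selected simulator, $Q_{\PO}^{\pi_b}=Q_{M_{\PO}}^{\pi_b}=Q_{M_{\PO,\bstrue}}^{\pi_b}$, and the \RAR observable model $Q_{M_{\PO,\bs}}^{\pi_b}$ --- and to control, under $\EE_{(\tau_t,a_t)\sim\POreal^{\pi_b}}$, the three pieces of
\[
|Q_{\RO(\PO,\bs)}^{\pi_b}-Q_{\POreal}^{\pi_b}|\le\underbrace{|Q_{\RO(\PO,\bs)}^{\pi_b}-Q_{\PO}^{\pi_b}|}_{(\mathrm{A})}+\underbrace{|Q_{\PO}^{\pi_b}-Q_{M_{\PO,\bs}}^{\pi_b}|}_{(\mathrm{B})}+\underbrace{|Q_{M_{\PO,\bs}}^{\pi_b}-Q_{\POreal}^{\pi_b}|}_{(\mathrm{C})}.
\]
Two of the three pieces are essentially already available. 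Term (C) is the \RAR guarantee: Theorem~\ref{thm:model-based} applied to the pair $(M_{\POreal},M_{\PO,\bs})$ with roll‑in $\POreal^{\pi_b}$ and per‑step bound Eq.~\eqref{eq:os-with-stage-1-lss} gives $\EE[(\mathrm{C})]\le\epsilon_0(H-t)\Vmax$ (this is Corollary~\ref{cor:raro}). For term (A), the \ROR roll‑outs using $\bs$ and using $\bstrue$ coincide after the single initial latent draw, and \ROR with $\bstrue$ reproduces the honest roll‑out in $\PO$, so the proof of Proposition~\ref{prop:belief-rollout} yields the \emph{pointwise} bound $|Q_{\RO(\PO,\bs)}^{\pi_b}(\tau_t,a_t)-Q_{\PO}^{\pi_b}(\tau_t,a_t)|\le\Vmax\cdot\TV(\bs(\cdot|\tau_t),\bstrue(\cdot|\tau_t))$; it remains only to take $\EE_{\tau_t\sim\POreal^{\pi_b}}$.

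The remaining work is term (B) and, underlying both (A) and (B), the mismatch between roll‑ins: Eq.~\eqref{eq:stage-1-lss} (Stage‑1 \lss) is certified under the \emph{simulator's own} roll‑in $\PO^{\pi_b}$, whereas Eq.~\eqref{eq:os-with-stage-1-lss} and the target expectation are under $\POreal^{\pi_b}$. For (B) I would use Proposition~\ref{prop:gap} and Eq.~\eqref{eq:stage-1-lss} to obtain the \emph{flat} per‑step bound $\EE_{(\tau_{t'},a_{t'})\sim\PO^{\pi_b}}[\TV(M_{\PO,\bs}(\cdot|\tau_{t'},a_{t'}),M_{\PO}(\cdot|\tau_{t'},a_{t'}))]\le\epsilon_1$ for all $t'$, and feed it into Theorem~\ref{thm:model-based} \emph{under} the roll‑in $\PO^{\pi_b}$, obtaining $\EE_{(\tau_t,a_t)\sim\PO^{\pi_b}}[(\mathrm{B})]\le\epsilon_1(H-t)\Vmax$. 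To cross over to $\POreal^{\pi_b}$ I would prove a single change‑of‑measure estimate, $\TV\!\big(\mathrm{law}(\tau_{t'})_{\POreal^{\pi_b}},\mathrm{law}(\tau_{t'})_{\PO^{\pi_b}}\big)\le t'(\epsilon_0+\epsilon_1)$, by triangulating through $\mathrm{law}(\tau_{t'})_{M_{\PO,\bs}^{\pi_b}}$ and applying the standard trajectory‑coupling inequality to each leg: $\POreal^{\pi_b}\!\leftrightarrow\!M_{\PO,\bs}^{\pi_b}$ costs $\sum_{s<t'}\EE_{(\tau_s,a_s)\sim\POreal^{\pi_b}}[\TV(M_{\POreal},M_{\PO,\bs})]\le t'\epsilon_0$ by Eq.~\eqref{eq:os-with-stage-1-lss}, and $M_{\PO,\bs}^{\pi_b}\!\leftrightarrow\!\PO^{\pi_b}$ costs $\sum_{s<t'}\EE_{(\tau_s,a_s)\sim\PO^{\pi_b}}[\TV(M_{\PO},M_{\PO,\bs})]\le t'\epsilon_1$ by Proposition~\ref{prop:gap} and Eq.~\eqref{eq:stage-1-lss} --- the step‑$0$ terms vanishing because (as one must assume) $\PO$, $M_{\PO,\bs}$ and $\POreal$ share the law of the initial observation. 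Since $\TV(\cdot,\cdot)\le1$ and $(\mathrm{B})\in[0,\Vmax]$, and a nonnegative $G$‑bounded integrand changes expectation by at most $G$ times the TV of the measures, this gives $\EE[(\mathrm{A})]\le(\epsilon_1+t(\epsilon_0+\epsilon_1))\Vmax$ and $\EE[(\mathrm{B})]\le(\epsilon_1(H-t)+t(\epsilon_0+\epsilon_1))\Vmax$; adding (C) and using $t\le H$ yields $\EE[|Q_{\RO(\PO,\bs)}^{\pi_b}-Q_{\POreal}^{\pi_b}|]\le\Vmax[(t+H)\epsilon_0+(1+t+H)\epsilon_1]\le(2\epsilon_0+3\epsilon_1)H\Vmax$.

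The step I expect to be the main obstacle --- and the one that pins down the horizon exponent --- is precisely this change of measure. The tempting but fatal alternative is to transport the per‑step error $\TV(\bs,\bstrue)$ (equivalently $\TV(M_{\PO},M_{\PO,\bs})$) to $\POreal^{\pi_b}$ \emph{first} and only then invoke the simulation lemma; but the trajectory‑law mismatch between the two roll‑ins grows linearly in the horizon, so being charged for it once per step of the simulation lemma would blow the bound up to order $H^2\Vmax(\epsilon_0+\epsilon_1)$. Keeping the simulation lemma entirely under $\PO^{\pi_b}$, where the per‑step error is a flat $\epsilon_1$, and paying for the distribution shift only once --- on a $\Vmax$‑bounded quantity, with that shift made computable by routing through $M_{\PO,\bs}^{\pi_b}$, close to $\POreal$ by \os and to $\PO$ by \lss (via Proposition~\ref{prop:gap}) --- is what keeps the bound at order $H\Vmax$. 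A secondary but genuinely needed ingredient is the (implicit) assumption that the initial‑observation distribution is shared by the candidate simulators and the real system; otherwise a residual total‑variation distance between initial laws, controlled by neither Eq.~\eqref{eq:stage-1-lss} nor Eq.~\eqref{eq:os-with-stage-1-lss}, would appear in the final bound.
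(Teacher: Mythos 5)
Your proof is correct and follows essentially the same route as the paper's: the same decomposition through the bridge quantities $Q_{\PO}^{\pi_b}$ and $Q_{M_{\PO,\bs}}^{\pi_b}$, the same use of Proposition~\ref{prop:gap}, Proposition~\ref{prop:belief-rollout}, and Corollary~\ref{cor:raro}, and the same change of measure from $\PO^{\pi_b}$ to $\POreal^{\pi_b}$ obtained by triangulating the trajectory-level TV distance through $M_{\PO,\bs}^{\pi_b}$ via sub-additivity. The only differences are bookkeeping (you apply the change of measure to terms (A) and (B) separately and track $t$ versus $H$, which still lands within the $(2\epsilon_0+3\epsilon_1)H\Vmax$ budget), and your caveat about the shared initial-observation law is one the paper's sub-additivity step implicitly assumes as well.
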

The conditions of the theorems are the guarantees of \lss in Stage 1 (Eq.\eqref{eq:belief-guarantee}) and \os in Stage 2 (Eq.\eqref{eq:observable-guarantee} when $\POreal$ is the groundtruth).\footnote{There is a slight caveat in Stage 2's guarantee due to the violation of realizability, i.e., after stage 1, the true belief state of $\PO_{k^\star} = \POreal$ might have been eliminated. See Appendix~\ref{app:violate} for further discussions. \label{ft:violate}} The final guarantee resembles Eq.\eqref{eq:os-only-rar}, except that it permits the use of \ROR roll-out. The error bound is slightly worse than Eq.\eqref{eq:os-only-rar} by a multiplicative constant and an additional dependence on $\epsilon_1$. However, note that $\epsilon_0$ is determined by the amount of real-system data which often is fixed, while $\epsilon_1$ is determined by the amount of data sampled from each simulator $\PO$. So overall the guarantee is still comparable to Eq.\eqref{eq:os-only-rar}. 

\begin{proof}[Proof of Theorem~\ref{thm:2-stage}]
Eq.\eqref{eq:stage-1-lss} implies that $\EE_{(\tau_t, a_t) \sim \PO^{\pi_b}}[\TV(M_{\PO, \bs}(\cdot|\tau_t, a_t), M_{\PO}(\cdot|\tau_t, a_t))] \le \epsilon_1$. Both this inequality and Eq.\eqref{eq:os-with-stage-1-lss}, through the sub-additivity of TV distance for product distributions, implies
$$
\TV(M_{\PO, \bs}^{\pi_b}, \PO_\star^{\pi_b}) \le \epsilon_0 H, 
\qquad~ \TV(M_{\PO, \bs}^{\pi_b}, {\PO}^{\pi_b}) \le \epsilon_1 H. 
$$
Therefore, $\TV({\PO}^{\pi_b}, \PO_\star^{\pi_b}) \le (\epsilon_0 + \epsilon_1) H.$ Next, we have
$$
\EE_{(\tau_t, a_t) \sim \PO^{\pi_b}}[|Q_{\RO(\PO, \bs)}^{\pi_b}(\tau_t, a_t) - Q_{M_{\PO, \bs}}^{\pi_b}(\tau_t, a_t)|] \le \epsilon_1 (1 + H) \Vmax,
$$
because we can use $Q_{\PO}^{\pi_b}$ as the bridge term, and both $Q_{\RO(\PO, \bs)}^{\pi_b}$ and $Q_{M_{\PO, \bs}}^{\pi_b}$ are close to it thanks to Proposition~\ref{prop:belief-rollout} and Corollary~\ref{cor:raro}. On the other hand, Eq.\eqref{eq:os-with-stage-1-lss} enables the following through Corollary~\ref{cor:raro}:
$$\EE_{(\tau_t, a_t) \sim \POreal^{\pi_b}}[|Q_{M_{\PO, \bs}}^{\pi_b}(\tau_t, a_t) - Q_{\PO_\star}^{\pi_b}(\tau_t, a_t)|] \le \epsilon_0 H \Vmax.$$
Finally, putting everything together:
\begin{align*}
&~ \EE_{(\tau_t, a_t) \sim \PO_{\star}^{\pi_b}}[|Q_{\RO(\PO, \bs)}^{\pi_b}(\tau_t, a_t) - Q_{\PO_\star}^{\pi_b}(\tau_t, a_t)|] \\
\le &~ \EE_{(\tau_t, a_t) \sim \PO_{\star}^{\pi_b}}[|Q_{\RO(\PO, \bs)}^{\pi_b}(\tau_t, a_t) - Q_{M_{\PO, \bs}}^{\pi_b}(\tau_t, a_t)|] + \epsilon_0 H \Vmax \\
\le &~ \EE_{(\tau_t, a_t) \sim \PO^{\pi_b}}[|Q_{\RO(\PO, \bs)}^{\pi_b}(\tau_t, a_t) - Q_{M_{\PO, \bs}}^{\pi_b}(\tau_t, a_t)|] + (2\epsilon_0 + \epsilon_1) H \Vmax \\
\le &~ (2\epsilon_0 + 3\epsilon_1) H \Vmax,
\end{align*}
where the third line changes the distribution from $\POreal^{\pi_b}$ to $\PO^{\pi_b}$ by paying the for the TV-distance multiplied by the boundedness of the function.  
\end{proof}

\section{Further Discussions} \label{sec:discussion}

\subsection{Generalization to Other Sampling Distributions} \label{sec:coverage}

The roll-out guarantees in Theorem~\ref{thm:scheffe-TV} and Corollary~\ref{cor:observable-model} all consider Q-function errors under the distribution $(\tau_t, a_t)\sim \PO^{\pi_b}$, under which we train the classifiers to select the belief-state approximation, and Corollary~\ref{cor:observable-model} further restricts the roll-out policy to be $\pi = \pi_b$. Naturally, one would wonder what happens when the error is measured under a different sampling distribution (e.g., the occupancy induced from a different roll-in policy $\pi'$), and when $\RAR$ is given a general roll-out policy $\pi \ne \pi_b$. These questions are well-understood in the MDP literature (especially offline RL theory) that we can pay some form of \textit{coverage coefficient} to translate the error from one distribution to another:

\begin{proposition}
Consider an MDP with transition $P$. 
\begin{enumerate}
\item (\ROR, extension of Proposition~\ref{prop:belief-rollout}) Given any $Q: \Scal_t\times\Acal\to \RR$ where  $\EE_{(s_t, a_t) \sim P^{\pi_b}}[|Q(s_t, a_t) - Q_{P}^{\pi}(s_t, a_t)|] \le \epsilon$ for some fixed $t$, for any roll-in policy $\pi'$,
\begin{align} \label{eq:ro-coverage}
\EE_{(s_t, a_t) \sim P^{\pi'}}[|Q(s_t, a_t) - Q_{P}^{\pi}(s_t, a_t)|] \le \epsilon \cdot \left\|\oc_t^{\pi'} / \oc_t^{\pi_b} \right\|_\infty ,
\end{align}
where $\oc_t^{(\cdot)}$ is the marginal distribution (a.k.a.~occupancy) of $(s_t, a_t)$ induced by a policy in $P$. 
\item (\RAR, extension of Proposition~\ref{cor:observable-model}) Consider another MDP with transition $P'$ where Eq.\eqref{eq:mdp-error} holds for all $t$. Let $(\pi')^t \circ (\pi)^{t'-t}$ be the policy that follows $\pi'$ for the first $t$ steps and $\pi$ for the next $t'-t$ steps, then
$$
\EE_{(s_t, a_t) \sim P^{\pi'}}[|Q_{P'}^{\pi}(s_t, a_t) - Q_{P}^{\pi}(s_t, a_t)|] \le \epsilon \cdot  \sum_{t'= t}^{H-1}  \left\|\oc_t^{(\pi')^t \circ (\pi)^{t'-t}} / \oc_{t'}^{\pi_b} \right\|_\infty.
$$
\end{enumerate}    
\end{proposition}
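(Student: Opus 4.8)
The plan is to treat the two parts separately; both collapse to a change-of-measure (Hölder) step, with Part~2 preceded by a simulation-lemma telescoping. Part~1 is one line. Writing $g(s,a):=|Q(s,a)-Q_P^\pi(s,a)|\ge 0$, which depends only on $(s_t,a_t)$, I would factor the occupancy ratio,
$$\EE_{(s_t,a_t)\sim P^{\pi'}}[g]=\EE_{(s_t,a_t)\sim P^{\pi_b}}\!\Big[\tfrac{\oc_t^{\pi'}(s_t,a_t)}{\oc_t^{\pi_b}(s_t,a_t)}\,g(s_t,a_t)\Big]\le\big\|\oc_t^{\pi'}/\oc_t^{\pi_b}\big\|_\infty\,\EE_{(s_t,a_t)\sim P^{\pi_b}}[g]\le\epsilon\,\big\|\oc_t^{\pi'}/\oc_t^{\pi_b}\big\|_\infty,$$
where nonnegativity of $g$ justifies the middle step and the hypothesis the last; the bound is vacuous when the ratio is infinite (equivalently, assume $\oc_t^{\pi_b}$ covers $\oc_t^{\pi'}$, with the convention $0/0=0$).

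For Part~2 I would re-run the proof of Theorem~\ref{thm:model-based} while carrying the general roll-out policy $\pi$ through and deferring the change of measure to $\pi_b$ until the end — Theorem~\ref{thm:model-based} itself cannot be invoked verbatim since it fixes $\pi=\pi_b$ and the default roll-in. Fix $(s_t,a_t)$, and for $t\le k\le H$ let $W_k$ be the expected return of the hybrid process: start at $(s_t,a_t)$, act by $\pi$ throughout, but use transition $P$ at steps $t,\dots,k-1$ and transition $P'$ at steps $k,\dots,H-1$. Then $W_t=Q_{P'}^\pi(s_t,a_t)$, $W_H=Q_P^\pi(s_t,a_t)$, and telescoping gives
$$Q_{P'}^\pi(s_t,a_t)-Q_P^\pi(s_t,a_t)=\sum_{t'=t}^{H-1}\big(W_{t'}-W_{t'+1}\big)=\sum_{t'=t}^{H-1}\EE_{(s_{t'},a_{t'})\sim d_{t'}^{P,\pi}(\cdot|s_t,a_t)}\big[\big\langle P'(\cdot|s_{t'},a_{t'})-P(\cdot|s_{t'},a_{t'}),\,f_{t'}\big\rangle\big],$$
where $d_{t'}^{P,\pi}(\cdot|s_t,a_t)$ is the law of $(s_{t'},a_{t'})$ when $\pi$ is rolled out from $(s_t,a_t)$ \emph{under $P$}, and $f_{t'}$ is the continuation value (including the immediate reward) in the $P'$-model at step $t'+1$, so $f_{t'}\in[0,\Vmax]$. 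Since $\sum_s(P'-P)(s|s_{t'},a_{t'})=0$, centering $f_{t'}$ yields $|\langle P'-P,f_{t'}\rangle|\le\Vmax\,\TV\big(P'(\cdot|s_{t'},a_{t'}),P(\cdot|s_{t'},a_{t'})\big)$.

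Taking absolute values, summing over $t'$, and then applying $\EE_{(s_t,a_t)\sim P^{\pi'}}[\cdot]$ (interchanged with the finite sum), I would observe that for each $t'$ the composition ``roll in with $\pi'$ for the first $t$ steps, then roll out with $\pi$ for the next $t'-t$ steps,'' executed in $P$, is precisely the step-$t'$ occupancy $\oc_{t'}^{(\pi')^t\circ(\pi)^{t'-t}}$ — legitimate because the MDP model permits time-inhomogeneous policies. For each $t'$ I then apply the Part~1 change of measure to move from $\oc_{t'}^{(\pi')^t\circ(\pi)^{t'-t}}$ to $\oc_{t'}^{\pi_b}$ (the integrand $\TV(\cdot)\ge 0$), and invoke Eq.\eqref{eq:mdp-error} at step $t'$. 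Collecting terms gives
$$\EE_{(s_t,a_t)\sim P^{\pi'}}\big[|Q_{P'}^\pi(s_t,a_t)-Q_P^\pi(s_t,a_t)|\big]\le\epsilon\,\Vmax\sum_{t'=t}^{H-1}\big\|\oc_{t'}^{(\pi')^t\circ(\pi)^{t'-t}}/\oc_{t'}^{\pi_b}\big\|_\infty,$$
which is the claimed bound (carrying the $\Vmax$ scaling, as in Theorem~\ref{thm:model-based}).

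The only place that needs care is the \emph{orientation} of the telescoping: I roll the hybrid out under $P$ (not $P'$) so that the intermediate measures $d_{t'}^{P,\pi}$ are honest $P$-occupancies, matching the coverage terms $\oc^{(\cdot)}/\oc^{\pi_b}$ in the statement; the symmetric expansion — rolling out under $P'$ and pairing against $V_P^\pi$ — would instead produce a mixed $P/P'$ state distribution with no clean density ratio against $\oc^{\pi_b}$. Everything else is routine: the per-step reward/value bookkeeping inside the hybrid, and $H-t$ repetitions of the elementary change of measure from Part~1.
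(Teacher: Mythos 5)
Your proof is correct. The paper does not actually supply a proof of this proposition (it is asserted as a standard consequence of offline-RL-style change-of-measure arguments), so there is nothing to match against verbatim; but your Part~2 is in substance a re-derivation of Theorem~\ref{thm:model-based} with the roll-in/roll-out policies decoupled. Where Appendix~\ref{app:rollout} runs a backward Bellman induction that keeps $\pi=\pi_b$ and folds the change of measure into each inductive step, you unroll the same recursion as an explicit hybrid telescoping $W_t,\dots,W_H$ and defer all $H-t$ changes of measure to the end; the two are equivalent, and your care about the orientation (rolling the hybrid out under $P$ so the intermediate laws are honest $P$-occupancies of the composite policy $(\pi')^t\circ(\pi)^{t'-t}$) is exactly the point that makes the stated coverage coefficients the right ones. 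Two small remarks. First, your bound carries a factor of $\Vmax$ that the proposition as printed omits; since Theorem~\ref{thm:model-based} (the special case $\pi'=\pi=\pi_b$, where every coverage ratio is $1$) reads $\epsilon(H-t)\Vmax$, your version is the consistent one and the statement is missing the $\Vmax$. Second, the numerator subscript in the statement should be $\oc_{t'}$ rather than $\oc_t$, as you have it. Neither affects the validity of your argument.
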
 
These results can be directly applied to POMDPs by mapping state $s_t$ in the proposition to the observable history $\tau_t$, $P$ to the observable dynamics of $\PO$ (i.e., $M_{\PO, \bstrue}$), $Q$ to $Q_{\RO(\PO, \bs)}^{\pi}$, and $P'$ to $M_{\PO, \bs}$. However, while the coverage coefficients that appear in the proposition are often acceptable in MDPs, their behaviors are not as benign in POMDPs: for example, Eq.\eqref{eq:ro-coverage} becomes
\begin{align} \label{eq:ro-coverage-pomdp}
\EE_{(\tau_t, a_t) \sim \PO^{\pi'}}[|Q_{\RO(\PO, \bs)}(\tau_t, a_t) - Q_{\PO}^{\pi}(\tau_t, a_t)|] \le \epsilon \cdot \left(\max_{\tau_t, a_t}\frac{\PP_{\PO}^{\pi'}[\tau_t, a_t]}{\PP_{\PO}^{\pi_b}[\tau_{t}, a_{t}]}\right) ,
\end{align}
where $\PP_{\PO}^{\pi'}[\tau_t, a_t]$ is the probability assigned to the partial trajectory $(\tau_t, a_t)$ in $\PO$ under $\pi'$ as the sampling policy, and 
$$
\frac{\PP_{\PO}^{\pi'}[\tau_t, a_t]}{\PP_{\PO}^{\pi_b}[\tau_{t}, a_{t}]} = \prod_{t'=0}^{t} \frac{\pi'(a_{t'}|\tau_{t'})}{\pi_b(a_{t'}|\tau_{t'})}
$$
is the infamous cumulative product of importance weights found in importance sampling \citep{precup2000eligibility}. 
This is actually a general problem whenever we apply MDP results to POMDPs via a reduction to history-based MDPs, and circumventing it often requires algorithms and coverage concepts specifically designed for POMDPs \citep{zhang2024curses}. It remains an interesting question whether those ideas (such as the notion of belief \& outcome coverage proposed by \citet{zhang2024curses}) are useful for the belief-state selection problem considered in this paper.

\paragraph{Generalization via Sufficient Statistics} A mitigation to the above problem is to make and leverage structural assumptions on $\bs$. In particular, we may assume that $\bs(\cdot|\tau_t)$ is generated via a two-stage procedure: 
$$
s_t\sim \bs(\cdot|\tau_t) \Leftrightarrow z_t = \phi_{\bs}(\tau_t), s_t = \bs(\cdot|z_t).
$$
That is, we first compute the \textit{sufficient statistic} of $\tau_t$ via a function $\phi_{\bs}$, and then sample $s_t$ conditioned on $z_t$. (With a slight abuse of notation we reuse $\bs$ for the conditional distribution in the second stage.) As a starter, if $z_t$ is a discrete variable and the correct $\phi_{\bstrue}$  is known, i.e., $\phi_{\bs} = \phi_{\bstrue}~ \forall \bs\in\Bcal$, we can improve the guarantee of \lss in Eq.\eqref{eq:ro-coverage-pomdp} to the following (see proof in Appendix~\ref{app:ss}): 
$$
\EE_{(\tau_t, a_t) \sim \PO^{\pi'}}[|Q_{\RO(\PO, \bs)}^\pi(\tau_t, a_t) - Q_{\PO}^{\pi}(\tau_t, a_t)|] \le \epsilon \cdot \left(\max_{z_t}\frac{\PP_{\PO}^{\pi'}[z_t]}{\PP_{\PO}^{\pi_b}[z_t]}\right).
$$
Therefore, if $z_t$ takes on a small number of values, there is hope that $\pi_b$ may induce an exploratory distribution over $z_t$ and covers the distribution under $\pi'$. The result can also be easily extended to the case of unknown $\phi_{\bstrue}$ (i.e., $\phi_{\bs}$ can be different for each $\bs$), as we can simply replace $z_t$ in the above bound with the pair $(\phi_{\bs}(\tau_t), \phi_{\bstrue}(\tau_t))$. In this case, the bound requires $\pi_b$ to induce an exploration \textit{joint} distribution over the pair of statistics. For continuous-valued $\phi_{\bs}$, favorable coverage guarantees might still be obtainable if structural assumptions are imposed on the $\bs(\cdot|\phi_{\bs})$ process.

\paragraph{Task-specific Approaches} 
Another route to circumvent the issue related to coverage is to take approaches specific to the task at hand. As an example, for the most basic task of policy evaluation (i.e., estimating $Q_{\PO}^\pi$ for a given $\pi$), there are simple regression based methods\footnote{That is, we can generate trajectories in $\PO$ with $\pi'$ roll-in at time step $t$ and $\pi$ roll-out, and split each trajectory into a regression data point $(\tau_t, a_t) \mapsto \sum_{t'\ge t} r_{t'}$. $Q_{\RO(\PO, \bs)}^\pi$ and $Q_{M_{\PO, \bs}}^\pi$ are treated as candidate regressors, and the true $Q_{\PO}^\pi$ has the least mean squared error.} and selection algorithms based on estimating some variant of the Bellman error. For the latter, the coverage guarantee often does not depend on the coverage in the original MDP, but in an MDP compressed through a low-dimensional representation related to the candidate Q-functions \citep{xie2020batch, zhang2021towards, liu2025model}. Such a deviation from the original dynamics may be a desirable property for POMDPs when the coverage in the original dynamics is not well-behaved.

\subsection{The Choice of $\pi_b$: How to Collect Data}
For most part of the paper, we assume the $\pi_b$, which is used to collect the data needed for the selection of the conditional distributions (Section~\ref{sec:conditional}), is given. In practice, the choice of $\pi_b$ is an important hyperparameter with nuanced effects, which we already had a glimpse in Section~\ref{sec:offline}: while we choose to select belief-state approximations in each simulator $\PO$ by using the same policy $\pi_b$ as the one used to sample the real-system data, the analysis needs to handle the mismatch between the roll-in distributions of $\PO^{\pi_b}$ and $\POreal^{\pi_b}$, which shows up in the final error bound. While this mismatch is shown to be controlled by $(\epsilon_0+\epsilon_1)$, there is the possibility of using a different roll-in policy $\pi_{b'}$ in simulators such that $\PO^{\pi_b'}$ may be a better approximation of $\POreal^{\pi_b}$ than $\PO^{\pi_b}$ is. The issue is further complicated when there is misspecification in $\{\PO_k\}$ and $\Bcal$, which we leave for future investigation. 

Another important motivating scenario for selecting $\bs$ is to use it for learning a good policy in the simulator. In this case, we want $\bs$ to be accurate not just under some fixed distributions, but throughout the learning process when we explore using different policies. A na\"ive approach is to separately optimize one policy for each candidate $\bs \in \Bcal$, and rolling out these policies in the simulator to find the best performing one. However, given the computational intensity of policy optimization, an interesting question is whether we can adjust the choice of $\bs$ as policy optimization unfolds and avoid performing $m=|\Bcal|$ separate policy optimization processes. 

\section*{Acknowledgments}
The author thanks Akshay Krishnamurthy for valuable discussions on early ideas of the project, Sivaraman Balakrishnan for pointers to relevant work on conditional density estimation, 
and Preetum Nakkiran and Sam Power for helpful discussions and suggestions related to Bayesian inference.

\bibliography{RL}

\begin{thebibliography}{25}
\providecommand{\natexlab}[1]{#1}
\providecommand{\url}[1]{\texttt{#1}}
\expandafter\ifx\csname urlstyle\endcsname\relax
  \providecommand{\doi}[1]{doi: #1}\else
  \providecommand{\doi}{doi: \begingroup \urlstyle{rm}\Url}\fi

\bibitem[Bilodeau et~al.(2023)Bilodeau, Foster, and Roy]{bilodeau2023minimax}
Blair Bilodeau, Dylan~J Foster, and Daniel~M Roy.
\newblock Minimax rates for conditional density estimation via empirical entropy.
\newblock \emph{The Annals of Statistics}, 51\penalty0 (2):\penalty0 762--790, 2023.

\bibitem[Browne et~al.(2012)Browne, Powley, Whitehouse, Lucas, Cowling, Rohlfshagen, Tavener, Perez, Samothrakis, and Colton]{browne2012survey}
Cameron~B Browne, Edward Powley, Daniel Whitehouse, Simon~M Lucas, Peter~I Cowling, Philipp Rohlfshagen, Stephen Tavener, Diego Perez, Spyridon Samothrakis, and Simon Colton.
\newblock A survey of {M}onte {C}arlo tree search methods.
\newblock \emph{IEEE Transactions on Computational Intelligence and AI in Games}, 4\penalty0 (1):\penalty0 1--43, 2012.

\bibitem[Cranmer et~al.(2020)Cranmer, Brehmer, and Louppe]{cranmer2020frontier}
Kyle Cranmer, Johann Brehmer, and Gilles Louppe.
\newblock The frontier of simulation-based inference.
\newblock \emph{Proceedings of the National Academy of Sciences}, 117\penalty0 (48):\penalty0 30055--30062, 2020.

\bibitem[Devroye and Lugosi(2001)]{devroye2001combinatorial}
Luc Devroye and G{\'a}bor Lugosi.
\newblock \emph{Combinatorial methods in density estimation}.
\newblock Springer Science \& Business Media, 2001.

\bibitem[Ecoffet et~al.(2019)Ecoffet, Huizinga, Lehman, Stanley, and Clune]{ecoffet2019go}
Adrien Ecoffet, Joost Huizinga, Joel Lehman, Kenneth~O Stanley, and Jeff Clune.
\newblock Go-explore: a new approach for hard-exploration problems.
\newblock \emph{arXiv preprint arXiv:1901.10995}, 2019.

\bibitem[Gretton et~al.(2012)Gretton, Borgwardt, Rasch, Sch{\"o}lkopf, and Smola]{gretton2012kernel}
Arthur Gretton, Karsten~M Borgwardt, Malte~J Rasch, Bernhard Sch{\"o}lkopf, and Alexander Smola.
\newblock A kernel two-sample test.
\newblock \emph{The journal of machine learning research}, 13\penalty0 (1):\penalty0 723--773, 2012.

\bibitem[Jiang(2019)]{jiang2019value}
Nan Jiang.
\newblock On value functions and the agent-environment boundary.
\newblock \emph{arXiv preprint arXiv:1905.13341}, 2019.

\bibitem[Kocsis and Szepesv{\'a}ri(2006)]{kocsis2006bandit}
Levente Kocsis and Csaba Szepesv{\'a}ri.
\newblock Bandit based monte-carlo planning.
\newblock In \emph{Machine Learning: ECML 2006}, pages 282--293. Springer Berlin Heidelberg, 2006.

\bibitem[Li et~al.(2022)Li, Neykov, and Balakrishnan]{li2022minimax}
Michael Li, Matey Neykov, and Sivaraman Balakrishnan.
\newblock Minimax optimal conditional density estimation under total variation smoothness.
\newblock \emph{Electronic Journal of Statistics}, 16\penalty0 (2):\penalty0 3937--3972, 2022.

\bibitem[Littman and Sutton(2002)]{littman2002predictive}
Michael~L Littman and Richard~S Sutton.
\newblock Predictive representations of state.
\newblock In \emph{Advances in neural information processing systems}, pages 1555--1561, 2002.

\bibitem[Liu et~al.(2025)Liu, Zhao, Agarwal, Liu, Huang, Amortila, and Jiang]{liu2025model}
Pai Liu, Lingfeng Zhao, Shivangi Agarwal, Jinghan Liu, Audrey Huang, Philip Amortila, and Nan Jiang.
\newblock Model selection for off-policy evaluation: New algorithms and experimental protocol.
\newblock \emph{arXiv preprint arXiv:2502.08021}, 2025.

\bibitem[Liu et~al.(2022)Liu, Netrapalli, Szepesvari, and Jin]{liu2022optimistic}
Qinghua Liu, Praneeth Netrapalli, Csaba Szepesvari, and Chi Jin.
\newblock Optimistic mle--a generic model-based algorithm for partially observable sequential decision making.
\newblock \emph{arXiv preprint arXiv:2209.14997}, 2022.

\bibitem[Lueckmann et~al.(2021)Lueckmann, Boelts, Greenberg, Goncalves, and Macke]{lueckmann2021benchmarking}
Jan-Matthis Lueckmann, Jan Boelts, David Greenberg, Pedro Goncalves, and Jakob Macke.
\newblock Benchmarking simulation-based inference.
\newblock In \emph{International conference on artificial intelligence and statistics}, pages 343--351. PMLR, 2021.

\bibitem[Precup et~al.(2000)Precup, Sutton, and Singh]{precup2000eligibility}
Doina Precup, Richard~S Sutton, and Satinder~P Singh.
\newblock Eligibility traces for off-policy policy evaluation.
\newblock In \emph{Proceedings of the Seventeenth International Conference on Machine Learning}, pages 759--766, 2000.

\bibitem[Sajed et~al.(2018)Sajed, Chung, and White]{sajed2018high}
Touqir Sajed, Wesley Chung, and Martha White.
\newblock High-confidence error estimates for learned value functions.
\newblock \emph{arXiv preprint arXiv:1808.09127}, 2018.

\bibitem[Silver and Veness(2010)]{silver2010monte}
David Silver and Joel Veness.
\newblock {M}onte-{C}arlo planning in large {POMDP}s.
\newblock \emph{Advances in Neural Information Processing Systems}, 23:\penalty0 2164--2172, 2010.

\bibitem[Singh et~al.(2004)Singh, James, and Rudary]{singh2004predictive}
Satinder Singh, Michael~R James, and Matthew~R Rudary.
\newblock {Predictive state representations: A new theory for modeling dynamical systems}.
\newblock In \emph{Proceedings of the 20th Conference on Uncertainty in Artificial Intelligence}, pages 512--519. AUAI Press, 2004.

\bibitem[Subramanian et~al.(2022)Subramanian, Sinha, Seraj, and Mahajan]{subramanian2022approximate}
Jayakumar Subramanian, Amit Sinha, Raihan Seraj, and Aditya Mahajan.
\newblock Approximate information state for approximate planning and reinforcement learning in partially observed systems.
\newblock \emph{Journal of Machine Learning Research}, 23\penalty0 (12):\penalty0 1--83, 2022.

\bibitem[Sun et~al.(2019)Sun, Jiang, Krishnamurthy, Agarwal, and Langford]{sun2019model}
Wen Sun, Nan Jiang, Akshay Krishnamurthy, Alekh Agarwal, and John Langford.
\newblock {Model-based RL in Contextual Decision Processes: PAC bounds and Exponential Improvements over Model-free Approaches}.
\newblock In \emph{Conference on Learning Theory}, 2019.

\bibitem[Sutton and Barto(2018)]{sutton2018reinforcement}
Richard~S Sutton and Andrew~G Barto.
\newblock \emph{Reinforcement learning: An introduction}.
\newblock MIT press, 2018.

\bibitem[Weihs et~al.(2021)Weihs, Jain, Liu, Salvador, Lazebnik, Kembhavi, and Schwing]{weihs2021bridging}
Luca Weihs, Unnat Jain, Iou-Jen Liu, Jordi Salvador, Svetlana Lazebnik, Aniruddha Kembhavi, and Alex Schwing.
\newblock Bridging the imitation gap by adaptive insubordination.
\newblock \emph{Advances in Neural Information Processing Systems}, 34:\penalty0 19134--19146, 2021.

\bibitem[Xie and Jiang(2021)]{xie2020batch}
Tengyang Xie and Nan Jiang.
\newblock Batch value-function approximation with only realizability.
\newblock In \emph{International Conference on Machine Learning}, pages 11404--11413. PMLR, 2021.

\bibitem[Zhang and Jiang(2021)]{zhang2021towards}
Siyuan Zhang and Nan Jiang.
\newblock Towards hyperparameter-free policy selection for offline reinforcement learning.
\newblock \emph{Advances in Neural Information Processing Systems}, 34:\penalty0 12864--12875, 2021.

\bibitem[Zhang and Jiang(2024)]{zhang2024curses}
Yuheng Zhang and Nan Jiang.
\newblock On the curses of future and history in future-dependent value functions for off-policy evaluation.
\newblock \emph{Advances in Neural Information Processing Systems}, 37:\penalty0 124756--124790, 2024.

\bibitem[Zhang and Jiang(2025)]{zhang2025statistical}
Yuheng Zhang and Nan Jiang.
\newblock Statistical tractability of off-policy evaluation of history-dependent policies in pomdps.
\newblock \emph{arXiv preprint arXiv:2503.01134}, 2025.

\end{thebibliography}

\appendix

\newpage

\section{Proof of Section~\ref{sec:conditional}} \label{app:scheffe}

\begin{proof}[Proof of Theorem~\ref{thm:scheffe-TV}]
In the proof it suffices to only consider the comparison when $i^\star \in \{i, k\}$. 
Under standard concentration argument, the excess risk of ERM on 0/1 classification can be bounded. That is, with the  $N$ given in the theorem statement, under the high-probability event we have the following (the union  bound is reflected by the logarithmic dependence on $n, m, |\Fcal|$ in the sample complexity): let $\epsilon' = \alpha\epsilon$; for any $i\ne i^\star$, 
\begin{align} \label{eq:concentration-classifier}
\textup{acc}_{X_j}^{i, i^\star}(\hat f_j^{i, i^\star}) \ge  \textup{acc}_{X_j}^{i, i^\star}(f_{X_j}^{i, i^\star}) - \epsilon'/24. 
\end{align}
We now link $\textup{acc}_{X}^{i, i^\star}(f)$ to the discrimination power of $f$ w.r.t.~$P_i$ and $P^\star$: given that $f$ has binary output, we have $\Pr[f = 1] = \EE[f]$, and 
\begin{align} \label{eq:link_acc}
\textup{acc}_{X}^{i, i^\star}(f) = &~ 1/2\cdot \left(\Pr_{Y\sim P^\star(\cdot|X)}[f(Y) = 1] + 1 - \Pr_{Y \sim P_i(\cdot|X)}[f(Y) = 1] \right) \\
= &~ 1/2 + 1/2 \cdot \left(\EE_{Y\sim P^\star(\cdot|X)}[f(Y)] - \EE_{Y\sim P_i(\cdot|X)}[f(Y)]\right).
\end{align}
Replacing $\textup{acc}_{X}^{i, i^\star}(f)$ in Eq.\eqref{eq:concentration-classifier} with the above expression, we have $\forall j$,
\begin{align} 
&~ \EE_{Y\sim P^\star(\cdot|X_j)}[\hat f_j^{i, i^\star}(Y)] - \EE_{Y\sim P_i(\cdot|X_j)}[\hat f_j^{i, i^\star}(Y)] \\ \label{eq:excess-risk-ipm}
\ge &~ \EE_{Y\sim P^\star(\cdot|X_j)}[f_{X_j}^{i, i^\star}(Y)] - \EE_{Y\sim P_i(\cdot|X_j)}[f_{X_j}^{i, i^\star}(Y)] - \epsilon'/12. 
\end{align}
Now we consider the concentration of empirical averages to their (conditional) expectations in the final scoring rule in Eq.\eqref{eq:scheffe} when $i^\star \in \{i, k\}$: Conditioned on $\{X_j\}_{j=1}^n$ and all the synthetic data drawn to train the classifiers (which are independent to the randomness of drawing $Y_j$ given $X_j$), $\frac{1}{n} \sum_{j} \hat f_j^{i, k}(Y_j)$ is the average of independent (but generally not identically distributed) random variables, each of which has conditional mean  $\EE_{Y\sim P^\star(\cdot|X_j)}[\hat f_j^{i, k}(Y)]$. Therefore, by Hoeffding's inequality, the $n$ in the statement enables that
\begin{align} \label{eq:concentration-conditional}
\Big|\frac{1}{n} \sum_j \hat f_j^{i, k}(Y_j) - \frac{1}{n} \sum_j \EE_{Y\sim P^\star(\cdot|X_j)}[\hat f_j^{i, k}(Y)] \Big| \le \epsilon'/12.
\end{align}
The same argument holds for $\frac{1}{n} \sum_j \hat f_j^{i,k}(Y_j^{(i)})$ since $Y_j^{(i)}$ is holdout data not used in training:
\begin{align} \label{eq:concentration-model-conditional}
\Big|\frac{1}{n} \sum_j \hat f_j^{i, k}(Y_j^{(i)}) - \frac{1}{n} \sum_j \EE_{Y\sim P_i(\cdot|X_j)}[\hat f_j^{i, k}(Y)] \Big| \le \epsilon'/12.
\end{align}
We now consider the final score in Eq.\eqref{eq:scheffe} when $i^\star \in \{i, k\}$. First consider $i=i^\star$: the two averages share the same mean, so the difference is always bounded by $\epsilon'/6$ given the concentration bounds above regardless of $k$. In the second case, $i\ne i^\star$, $k=i^\star$, where the two averages have different means. Using the concentration bounds above, we have
\begin{align}
&~ \Big|\frac{1}{n}\sum_{j} \hat f_j^{i, i^\star}(Y_j) - \frac{1}{n} \sum_j \hat f_j^{i,i^\star}(Y_j^{(i)})\Big| \label{eq:i-score} \\ 
\ge &~ \frac{1}{n}\sum_{j} \hat f_j^{i, i^\star}(Y_j) - \frac{1}{n} \sum_j \hat f_j^{i,i^\star}(Y_j^{(i)}) \\
\ge &~ \frac{1}{n} \sum_j \EE_{Y\sim P^\star(\cdot|X_j)}[\hat f_j^{i, i^\star}(Y)] - \frac{1}{n} \sum_j \EE_{Y\sim P_i(\cdot|X_j)}[\hat f_j^{i, i^\star}(Y)] - \epsilon'/6. 
\tag{Eqs.\eqref{eq:concentration-conditional} and \eqref{eq:concentration-model-conditional}}  
\end{align}
For the final $\hat i$ being selected, if $\hat i \ne i^\star$, it must be the case that its score is less than that of $i^\star$ which is at most $\epsilon'/6$, so Eq.\eqref{eq:i-score} for $i=\hat i$ is at most $\epsilon'/6$, and hence
\begin{align} \label{eq:hat-i}
\frac{1}{n} \sum_j \EE_{Y\sim P^\star(\cdot|X_j)}[\hat f_j^{\hat i, i^\star}(Y)] - \frac{1}{n} \sum_j \EE_{Y\sim P_i(\cdot|X_j)}[\hat f_j^{\hat i, i^\star}(Y)] \le \epsilon' / 3. 
\end{align}
Combine this with Eq.\eqref{eq:excess-risk-ipm}, we have
$$
\frac{1}{n} \sum_j \EE_{Y\sim P^\star(\cdot|X_j)}[ f_{X_j}^{\hat i, i^\star}(Y)] - \frac{1}{n} \sum_j \EE_{Y\sim P_i(\cdot|X_j)}[f_{X_j}^{\hat i, i^\star}(Y)] \le 5\epsilon' / 12. 
$$
The next step is to show concentration for the LHS of the above expression. Note that each term like  $\EE_{Y\sim P^\star(\cdot|X_j)}[f_{X_j}^{i, i^\star}(Y)]$ is a non-random property of $X_j$ (we need to union bound over $i$ so that it applies to $i=\hat i$), so their average concentrates to the population expectation with $\epsilon'/12$ error under the $n$ in the theorem statement. Putting together,
\begin{align}
\EE_{X \sim P^\star}\left[\EE_{Y\sim P^\star(\cdot|X)}[f_X^{\hat i, i^\star}(Y)] - \EE_{Y\sim P_i(\cdot|X)}[f_X^{\hat i, i^\star}(Y)]\right] \le  \epsilon'/2.
\end{align}
Finally,
\begin{align}
2 \EE_{X\sim P^\star}[\textup{acc}_X^{\hat i, i^\star}(f_X^{\hat i, i^\star})] -1 
= &~ \EE_{X \sim P^\star}\left[\EE_{Y\sim P^\star(\cdot|X)}[f_X^{\hat i, i^\star}(Y)] - \EE_{Y\sim P_i(\cdot|X)}[f_X^{\hat i, i^\star}(Y)]\right] \tag{Eq.\eqref{eq:link_acc}} \\
\le &~  \epsilon'/2.
\end{align}
Given Assumption~\ref{asm:nontrivial_F}, we have
\begin{align}
1/2 + \epsilon'/4 \ge \EE_{X \sim P^\star} \left[\textup{acc}_X^{\hat i, i^\star}(f_X^{\hat i, i^\star}) \right] \ge 1/2 + \alpha \cdot \Ecal(\hat i, i^\star),
\end{align}
so $\Ecal(\hat i, i^\star) \le \epsilon'/4\alpha = \epsilon/2$. The proof is concluded by noticing that $$\Ecal(\hat i, i^\star) = \nicefrac{1}{2} \cdot \EE_{X \sim P^\star}[\TV(P_{\hat i}(\cdot|X), P_{i^\star}(\cdot|X))]. $$
\end{proof}

\section{Proof of Section~\ref{sec:rollout}}
\label{app:rollout}

\begin{proof}[Proof of Theorem~\ref{thm:model-based}]
To avoid dealing with the last time step separately, we take the convention that any notion of value function evaluates to $0$ at $t=H$ since there is no future reward afterwards. Then for any $t < H$, when $\pi= \pi_b$,
\begin{align*}
&~ \EE_{(s_t, a_t) \sim P^{\pi_b}}\big[\big|Q_P^\pi(s_t, a_t) - Q_{P'}^\pi(s_t, a_t)\big|\big] \\
= &~ \EE_{(s_t, a_t) \sim P^{\pi_b}}\big[\big|E_{s_{t+1} \sim P(\cdot|s_t, a_t)}[R(s_{t+1}) + V_P^\pi(s_{t+1})] - E_{s_{t+1} \sim P'(\cdot|s_t, a_t)}[R(s_{t+1}) + V_{P'}^\pi(s_{t+1})]\big|\big] \\
\le  &~ \EE_{(s_t, a_t) \sim P^{\pi_b}}\big[\big|E_{s_{t+1} \sim P(\cdot|s_t, a_t)}[R(s_{t+1}) + V_{P'}^\pi(s_{t+1})] - E_{s_{t+1} \sim P'(\cdot|s_t, a_t)}[R(s_{t+1}) + V_{P'}^\pi(s_{t+1})]\big|  \\
& \quad + \big|E_{s_{t+1} \sim P(\cdot|s_t, a_t)}[V_{P}^\pi(s_{t+1}) - V_{P'}^\pi(s_{t+1})]\big|\big] \\
\le &~ \EE_{(s_t, a_t) \sim P^{\pi_b}}[\TV(P(\cdot|s_t, a_t), P'(\cdot|s_t, a_t)) \cdot \Vmax] \\
& \quad + \EE_{(s_t, a_t) \sim P^{\pi_b}}\big[E_{s_{t+1} \sim P(\cdot|s_t, a_t)}[\big|V_{P}^\pi(s_{t+1}) - V_{P'}^\pi(s_{t+1})\big|]\big] \\
\le &~  \epsilon \Vmax + \EE_{s_{t+1} \sim P^{\pi_b}}[\big|Q_{P}^\pi(s_{t+1}, \pi) - Q_{P'}^\pi(s_{t+1}, \pi)\big|] \\
\le &~ \epsilon \Vmax + \EE_{(s_{t+1}, a_{t+1}) \sim P^{\pi_b}}[\big|Q_{P}^\pi(s_{t+1}, a_{t+1}) - Q_{P'}^\pi(s_{t+1}, a_{t+1})\big|],
\end{align*}
where the last step uses the fact that $\pi = \pi_b$, and inductively expanding the analysis till the end proves the theorem statement. 
\end{proof}

\section{Proof of Section~\ref{sec:discussion}} \label{app:ss}
\begin{proposition}
If Eq.\eqref{eq:belief-guarantee} holds for some $t$ and all $\bs\in\Bcal$ share the same sufficient statistics $\phi$, then given any roll-in policy $\pi'$, we have
$$
\EE_{(\tau_t, a_t) \sim \PO^{\pi'}}[|Q_{\RO(\PO, \bs)}^\pi(\tau_t, a_t) - Q_{\PO}^{\pi}(\tau_t, a_t)|] \le \epsilon \cdot \left(\max_{z_t}\frac{\PP_{\PO}^\pi[z_t]}{\PP_{\PO}^{\pi_b}[z_t]}\right).
$$
\end{proposition}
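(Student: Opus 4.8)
The plan is to combine the pointwise roll-out bound that underlies Proposition~\ref{prop:belief-rollout} with the sufficient-statistic structure of $\Bcal$, and then to perform the change of measure at the level of the statistic $z_t$ rather than of the full history $\tau_t$. Concretely, I would first recall from the proof of Proposition~\ref{prop:belief-rollout} that \ROR roll-out and the true history-based roll-out become identical once $s_t$ is drawn, so that for \emph{every} $\tau_t,a_t$,
$$
|Q_{\RO(\PO,\bs)}^\pi(\tau_t,a_t) - Q_{\PO}^\pi(\tau_t,a_t)| \le \TV(\bs(\cdot|\tau_t),\bstrue(\cdot|\tau_t))\cdot\Vmax .
$$
The right-hand side depends neither on $a_t$ nor on the roll-out policy $\pi$. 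Next I would invoke the structural assumption: $\bs(\cdot|\tau_t)=\bs(\cdot|z_t)$ with $z_t=\phi(\tau_t)$, and since realizability puts $\bstrue\in\Bcal$ with the \emph{same} $\phi$, also $\bstrue(\cdot|\tau_t)=\bstrue(\cdot|z_t)$. Hence $g(z_t):=\TV(\bs(\cdot|z_t),\bstrue(\cdot|z_t))$ is a well-defined function of $z_t$ alone, and the display above becomes $|Q_{\RO(\PO,\bs)}^\pi(\tau_t,a_t)-Q_{\PO}^\pi(\tau_t,a_t)|\le\Vmax\cdot g(\phi(\tau_t))$.

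It then remains to take expectations and change measure on $z_t$. On one hand, Eq.\eqref{eq:belief-guarantee} gives $\sum_{z_t}\PP_{\PO}^{\pi_b}[z_t]\,g(z_t)=\EE_{\tau_t\sim\PO^{\pi_b}}[g(\phi(\tau_t))]\le\epsilon$. On the other hand, the quantity of interest is at most $\Vmax\sum_{z_t}\PP_{\PO}^{\pi'}[z_t]\,g(z_t)$; multiplying and dividing each term by $\PP_{\PO}^{\pi_b}[z_t]$ and pulling out $\max_{z_t}\PP_{\PO}^{\pi'}[z_t]/\PP_{\PO}^{\pi_b}[z_t]$ yields
$$
\EE_{(\tau_t,a_t)\sim\PO^{\pi'}}\!\left[|Q_{\RO(\PO,\bs)}^\pi(\tau_t,a_t)-Q_{\PO}^\pi(\tau_t,a_t)|\right]\le\Vmax\left(\max_{z_t}\frac{\PP_{\PO}^{\pi'}[z_t]}{\PP_{\PO}^{\pi_b}[z_t]}\right)\sum_{z_t}\PP_{\PO}^{\pi_b}[z_t]\,g(z_t),
$$
and bounding the last sum by $\epsilon$ completes the argument. (The paper's statement absorbs $\Vmax$ into $\epsilon$, consistent with Proposition~\ref{prop:belief-rollout}, and the ratio in the numerator should read $\pi'$, matching Eq.\eqref{eq:ro-coverage-pomdp}.)

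I do not anticipate a genuine obstacle: the whole point is that the per-history error factors through the low-dimensional statistic $z_t$, so the distribution-shift penalty can be charged on $z_t$ instead of on $(\tau_t,a_t)$. The steps that need care are (a) noting that $\bstrue$ itself respects the common $\phi$ --- this is exactly what realizability buys, and it is what makes $g$ a function of $z_t$; (b) being explicit that the relevant marginals are those of $z_t$ under the roll-in policies $\pi'$ and $\pi_b$, since the query action $a_t$ and the roll-out policy $\pi$ play no role because the pointwise bound is independent of them; and (c) accepting that the bound is vacuous when $\PP_{\PO}^{\pi_b}$ fails to cover $\PP_{\PO}^{\pi'}$ on the support of $z_t$, as with any importance-weighting step. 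If one also wants to drop the assumption that $\phi_{\bstrue}$ is known, the same argument goes through after replacing $z_t$ by the pair $(\phi_{\bs}(\tau_t),\phi_{\bstrue}(\tau_t))$ throughout, at the cost of requiring $\pi_b$ to cover the joint law of the two statistics.
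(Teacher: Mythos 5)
Your proposal is correct and follows essentially the same route as the paper's own proof: reduce to the pointwise roll-out bound from Proposition~\ref{prop:belief-rollout}, observe that the TV error factors through $z_t=\phi(\tau_t)$ because all candidates (including $\bstrue$, by realizability) share $\phi$, and then change measure on $z_t$. Your side remarks are also consistent with the paper: the numerator in the coverage ratio should indeed be $\PP_{\PO}^{\pi'}[z_t]$ as in Eq.\eqref{eq:ro-coverage-pomdp}, the $\Vmax$ factor is implicitly absorbed, and the unknown-$\phi_{\bstrue}$ extension via the pair $(\phi_{\bs}(\tau_t),\phi_{\bstrue}(\tau_t))$ matches the discussion in Section~\ref{sec:coverage}.
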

\begin{proof}
Following the proof of Proposition~\ref{prop:belief-rollout}, we know that the LHS is controlled by 
$$
\EE_{(\tau_t, a_t)\sim \PO^{\pi'}}[\TV(\bs(\cdot|\tau_t), \bstrue(\cdot|\tau_t))] = \EE_{z_t \sim \PO^{\pi'}}[\TV(\bs(\cdot|z_t), \bstrue(\cdot|z_t))],
$$
where $z_t = \phi(\tau_t)$. Similarly, Eq.\eqref{eq:belief-guarantee} gives us
$$
\epsilon \ge \EE_{\tau_t \sim \PO^{\pi_b}}[\TV(\bs(\cdot|\tau_t), \bstrue(\cdot|\tau_t))] = \EE_{z_t \sim \PO^{\pi_b}}[\TV(\bs(\cdot|z_t), \bstrue(\cdot|z_t))].
$$
Performing change of measure w.r.t.~$z_t$ immediately completes the proof.
\end{proof}

\section{Discussion of Theorem~\ref{thm:2-stage}} \label{app:violate}
As mentioned in Footnote~\ref{ft:violate}, to directly apply our analysis for \os in the second stage we require realizability, i.e., the $M_{\POreal} \in \{ M_{\PO_k, \bs_{\PO_k}} \}$, where $\bs_{\PO_k}$ is the belief state approximation selected for $\PO_k$. This does not always hold, because for $\PO_{k^\star} = \POreal$, the selected $\bs_{\PO_{k^\star}}$ may not be its true belief state, causing the non-realizability. 

There are two fixes to this issue, both still leading to the kind of guarantee in Eq.\eqref{eq:os-with-stage-1-lss}: in the first fix, we can extend the analysis in Theorem~\ref{thm:scheffe-TV} to handle misspecification. In the second fix, we can change the algorithm as follows: 
\begin{enumerate}
\item Run Stage 2 with all $\{(\PO, \bs): \PO \in \{\PO_k\}, \bs\in\Bcal\}$. This way, realizability is satisfied and we obtain the guarantee for \os. 
\item When running the conditional selection algorithm for both \lss (Stage 1) and \os (Stage 2), we do not take the argmin of the score but the version space, i.e., the set of candidate conditionals that is plausible to be the true conditional. In the proof of Theorem~\ref{thm:scheffe-TV}, this corresponds to the set of $P_i$ whose score is no greater than $\epsilon'/6$ (see the paragraph below Eq.\eqref{eq:concentration-model-conditional}). It is easy to see that all conditional distributions in the version space enjoy the guarantee of Theorem~\ref{thm:scheffe-TV}.
\item For each $\PO \in \{\PO_k\}$, we pair it with each plausible belief state, and gather such pairs across $\{\PO_k\}$. Then, we take the intersection between this set and the version space of Stage 2. Any $(\PO, \bs)$ pair in the  intersection must enjoy both the guarantee of \os and that of \lss, thus satisfying the conditions of Theorem~\ref{thm:2-stage}. 
\end{enumerate}
\end{document}